\documentclass[11pt]{article}
\pdfoutput=1
\usepackage{comment, fullpage}

\usepackage{amssymb}
\usepackage{graphicx}
\usepackage{url}
\usepackage[pdftex,bookmarksnumbered,bookmarksopen,
colorlinks,citecolor=blue,linkcolor=blue,urlcolor=blue]{hyperref}
\usepackage{framed}
\usepackage{xcolor}
\usepackage{soul}

\usepackage[utf8]{inputenc} 
\usepackage[T1]{fontenc}    
\usepackage{url}            
\usepackage{booktabs}       
\usepackage{amsfonts}       
\usepackage{nicefrac}       
\usepackage{microtype}      
\usepackage{amsmath}
\usepackage{subcaption}
\usepackage{graphicx}
\usepackage{appendix}
\usepackage{amsmath,amsfonts,amsthm}
\usepackage{algorithm,algorithmic}
\usepackage[algo2e,ruled,vlined]{algorithm2e}
\usepackage{mdwlist}
\usepackage{xspace}
\usepackage{color}
\usepackage{mathrsfs}
\usepackage{amssymb}
\usepackage{wrapfig}
\usepackage{bm}
\usepackage{amsmath}
\usepackage{booktabs}
\usepackage{comment}
\setlength{\parskip}{0.2\baselineskip}
\usepackage{pifont}
\newcommand{\cmark}{\ding{51}}%
\newcommand{\xmark}{\ding{55}}%


\newcommand{\Appendix}[1]{the full version for}

\usepackage{color}

\newtheorem{theorem}{Theorem}
\newtheorem{lemma}[theorem]{Lemma}

\newtheorem{corollary}[theorem]{Corollary}
\newtheorem{remark}{Remark}

\renewcommand{\r}{\mathbf{r}}
\renewcommand{\u}{\mathbf{u}}
\renewcommand{\v}{\mathbf{v}}

\newcommand{\x}{\mathbf{x}}
\newcommand{\y}{\mathbf{y}}
\newcommand{\z}{\mathbf{z}}

\newcommand{\R}{\mathbb{R}}

\newcommand{\0}{\mathbf{0}}
\newcommand{\1}{\mathbf{1}}
\renewcommand{\comment}[1]{}

\newcommand{\cI}{\mathcal{I}}

\newcommand{\cN}{\mathcal{N}}
\newcommand{\cO}{\mathcal{O}}
\newcommand{\cP}{\mathcal{P}}

\newcommand{\cX}{\mathcal{X}}
\newcommand{\cY}{\mathcal{Y}}

\newcommand{\bbE}{\mathbb{E}}
\newcommand{\cl}{\textup{\textsf{cl}}}
\newcommand{\conv}{\mathsf{conv}}
\newcommand{\epi}{\mathsf{epi}}
\newcommand{\minimax}{\mathsf{minimax}}
\newcommand{\maximin}{\mathsf{maximin}}
\newcommand{\worst}{\mathsf{worst}}
\newcommand{\dom}{\mathsf{dom}}
\newcommand{\uniform}{\textup{\textsf{Uniform}}}

\DeclareMathOperator*{\argmin}{argmin}

\usepackage{color}
\usepackage{colortbl}
\definecolor{mygray}{gray}{.9}

\title{Stackelberg GAN: Towards Provable Minimax Equilibrium via Multi-Generator Architectures}
\usepackage{times}

\author{
Hongyang Zhang \\ Carnegie Mellon University \\ \small{hongyanz@cs.cmu.edu} \and
Susu Xu  \\ Carnegie Mellon University \\ \small{susux@andrew.cmu.edu} \and
Jiantao Jiao  \\ UC Berkeley \\ \small{jiantao@eecs.berkeley.edu} \and
Pengtao Xie  \\ Petuum Inc. \\ \small{pengtao.xie@petuum.com} \and
Ruslan Salakhutdinov  \\ Carnegie Mellon University \\ \small{rsalakhu@cs.cmu.edu} \and
Eric P. Xing  \\ Carnegie Mellon University \\ \small{epxing@cs.cmu.edu } \and
}

\date{}

\begin{document}
\maketitle

\begin{abstract}
We study the problem of alleviating the instability issue in the GAN training procedure via new architecture design. The discrepancy between the minimax and maximin objective values could serve as a proxy for the difficulties that the alternating gradient descent encounters in the optimization of GANs. In this work, we give new results on the benefits of multi-generator architecture of GANs. We show that the minimax gap shrinks to $\epsilon$ as the number of generators increases with rate $\widetilde\cO(1/\epsilon)$. This improves over the best-known result of $\widetilde\cO(1/\epsilon^2)$. At the core of our techniques is a novel application of Shapley-Folkman lemma to the \emph{generic} minimax problem, where in the literature the technique was only known to work when the objective function is restricted to the Lagrangian function of a constraint optimization problem. Our proposed Stackelberg GAN performs well experimentally in both synthetic and real-world datasets, improving Fr\'echet Inception Distance by $14.61\%$ over the previous multi-generator GANs on the benchmark datasets.
\end{abstract}

\section{Introduction}
Generative Adversarial Nets (GANs) are emerging objects of study in machine learning, computer vision, natural language processing, and many other domains. In machine learning, study of such a framework has led to significant advances in adversarial defenses~\cite{xiao2018generating,samangouei2018defense} and machine security~\cite{athalye2018obfuscated,samangouei2018defense}. In computer vision and natural language processing, GANs have resulted in improved performance over standard generative models for images and texts~\cite{goodfellow2014generative}, such as variational autoencoder~\cite{kingma2013auto} and deep Boltzmann machine~\cite{salakhutdinov2010efficient}. A main technique to achieve this goal is to play a minimax two-player game between generator and discriminator under the design that the generator tries to confuse the discriminator with its generated contents and the discriminator tries to distinguish real images/texts from what the generator creates.

Despite a large amount of variants of GANs, many fundamental questions remain unresolved. One of the long-standing challenges is 
designing \emph{universal, easy-to-implement} architectures that alleviate the instability issue of GANs training. Ideally, GANs are supposed to solve the minimax optimization problem~\cite{goodfellow2014generative}, but in practice alternating gradient descent methods do not clearly privilege minimax over maximin or vice versa (page 35, \cite{goodfellow2016nips}), which may lead to instability in training if there exists a large discrepancy between the minimax and maximin objective values. The focus of this work is on improving the stability of such minimax game in the training process of GANs.


To alleviate the issues caused by the large minimax gap, our study is motivated by the zero-sum Stackelberg competition~\cite{sinha2018stackelberg} in the domain of game theory. In the Stackelberg leadership model, the players of this game are one \emph{leader} and multiple \emph{followers}, where the leader firm moves first and then the follower firms move sequentially. It is known that the Stackelberg model can be solved to find a \emph{subgame perfect Nash equilibrium}. We apply this idea of Stackelberg leadership model to the architecture design of GANs. That is, we design an improved GAN architecture with multiple generators (followers) which team up to play against the discriminator (leader). We therefore name our model \emph{Stackelberg GAN}. Our theoretical and experimental results establish that: \emph{GANs with multi-generator architecture have smaller minimax gap, and enjoy more stable training performances.}


\comment{
\begin{figure}[t]
\centering
\begin{subfigure}{0.195\textwidth}
    \includegraphics[width=\textwidth]{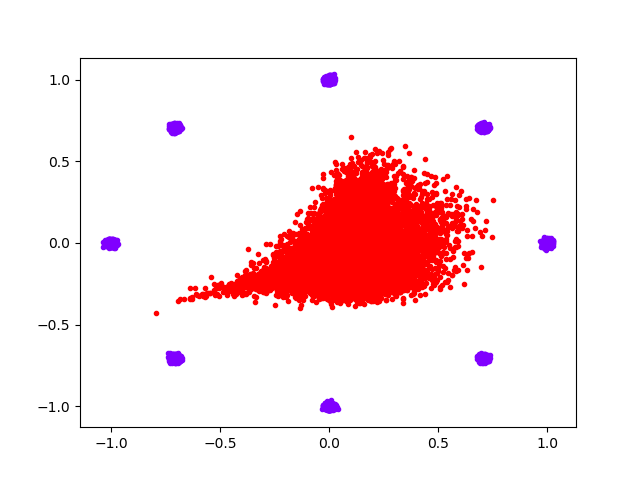}
    \caption{Step 0.}
  \end{subfigure}
  \begin{subfigure}{0.195\textwidth}
    \includegraphics[width=\textwidth]{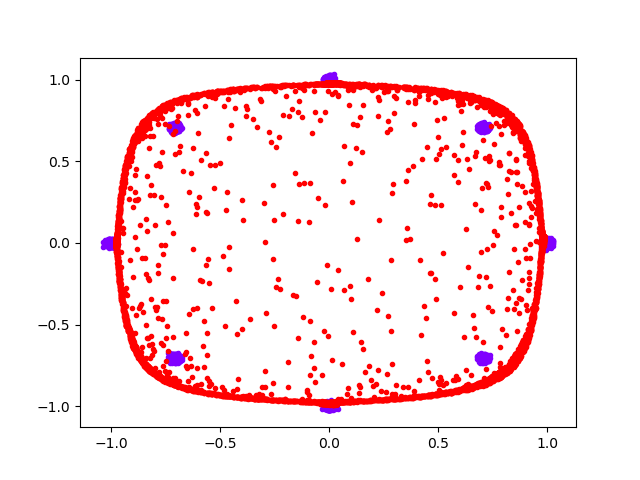}
    \caption{Step 6k.}
  \end{subfigure}
  \begin{subfigure}{0.195\textwidth}
    \includegraphics[width=\textwidth]{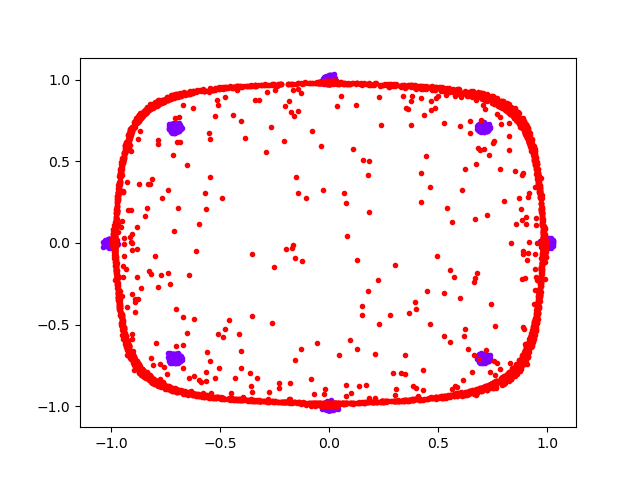}
    \caption{Step 13k.}
  \end{subfigure}
  \begin{subfigure}{0.195\textwidth}
    \includegraphics[width=\textwidth]{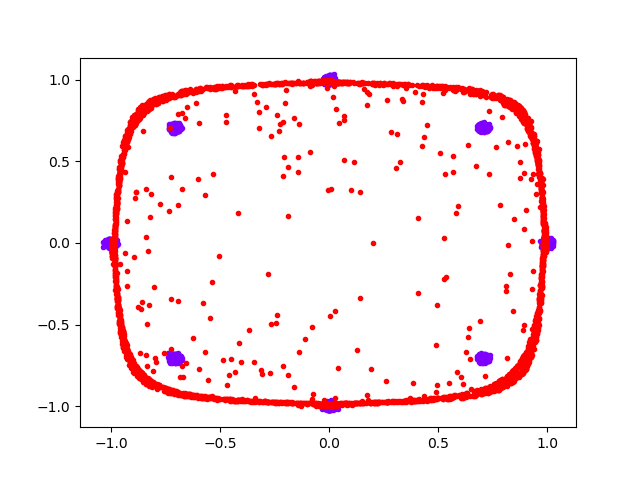}
    \caption{Step 19k.}
  \end{subfigure}
  \begin{subfigure}{0.195\textwidth}
    \includegraphics[width=\textwidth]{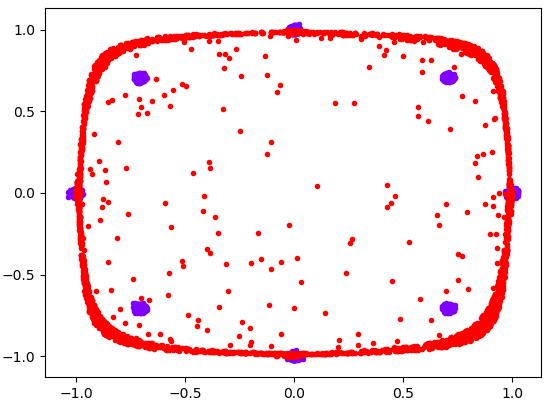}
    \caption{Step 25k.}
  \end{subfigure}

  \begin{subfigure}{0.195\textwidth}
    \includegraphics[width=\textwidth]{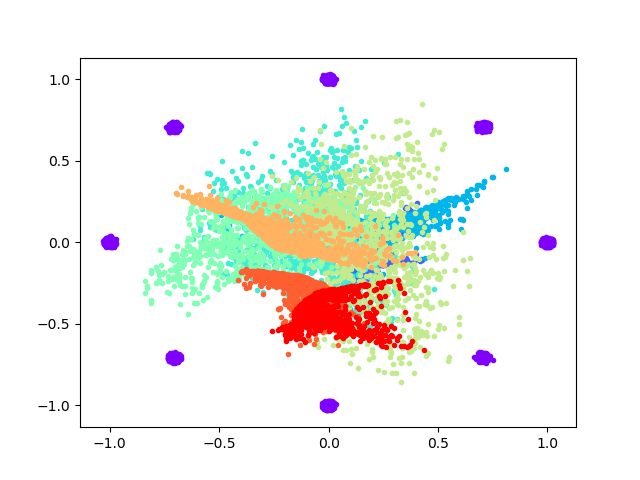}
    \caption{Step 0.}
  \end{subfigure}
  \begin{subfigure}{0.195\textwidth}
    \includegraphics[width=\textwidth]{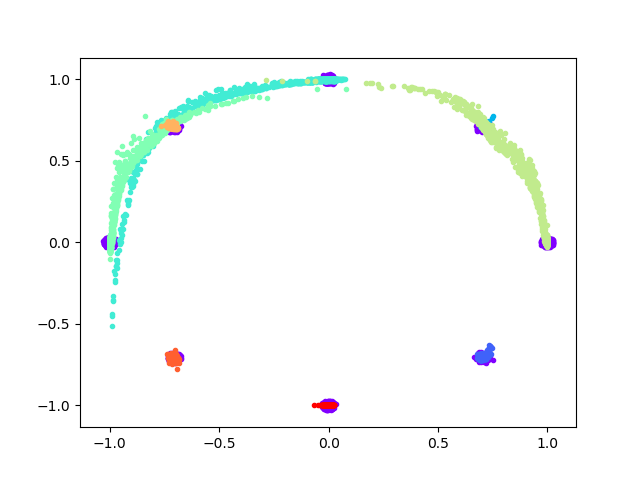}
    \caption{Step 6k.}
  \end{subfigure}
  \begin{subfigure}{0.195\textwidth}
    \includegraphics[width=\textwidth]{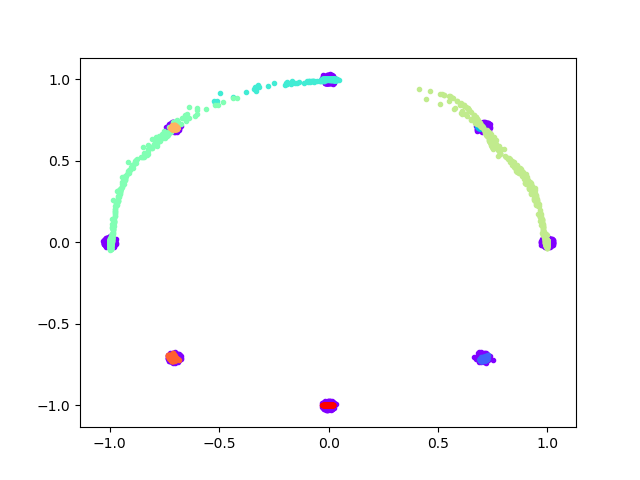}
    \caption{Step 13k.}
  \end{subfigure}
  \begin{subfigure}{0.195\textwidth}
    \includegraphics[width=\textwidth]{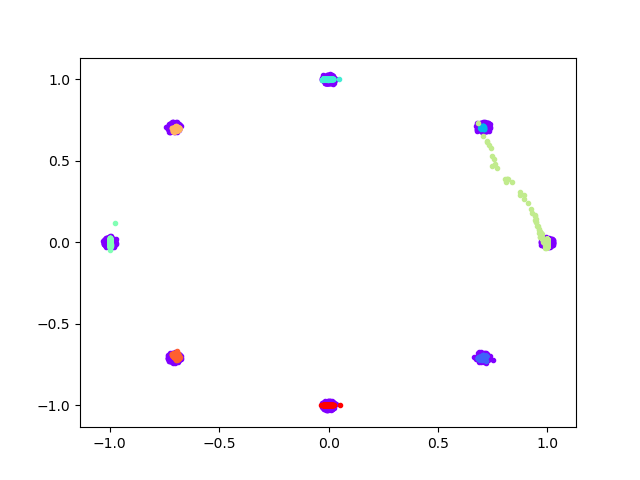}
    \caption{Step 19k.}
  \end{subfigure}
  \begin{subfigure}{0.195\textwidth}
    \includegraphics[width=\textwidth]{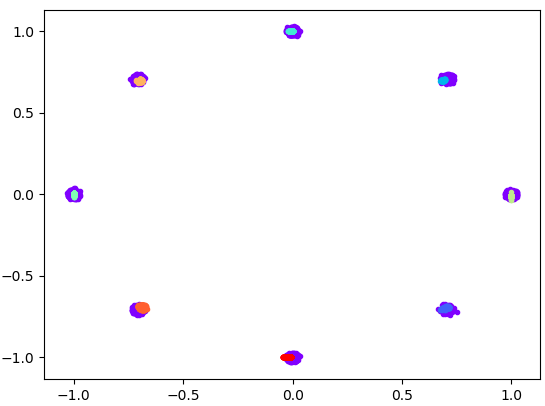}
    \caption{Step 25k.}
  \end{subfigure}
  \caption{Stackelberg GAN stabilizes the training procedure on a toy 2D mixture of 8 Gaussians. \textbf{Top Row:} Standard GAN training. \textbf{Bottom Row:} Stackelberg GAN training with 8 generator ensembles, each of which is denoted by one color.}
  \label{figure: Stackelberg GAN stabilizes the training procedure on a toy 2D mixture of 8 Gaussians}
  \vspace{-0.5cm}
\end{figure}

}

\begin{figure}[t]
\centering
\begin{subfigure}{0.65\textwidth}
    \includegraphics[width=\textwidth]{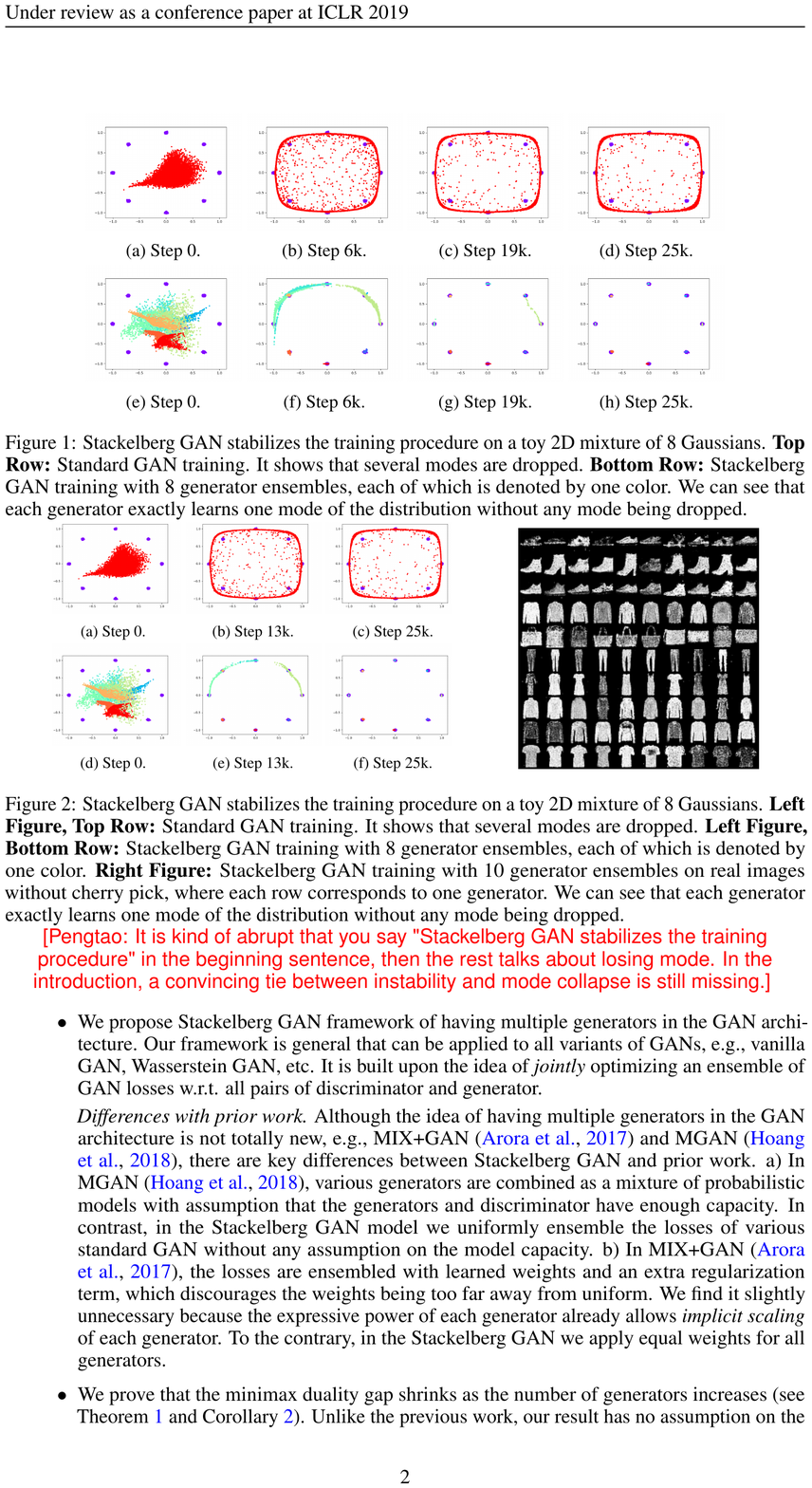}
  \end{subfigure}
  \hspace{+0.5cm}
  \begin{subfigure}{0.3\textwidth}
    \includegraphics[width=\textwidth]{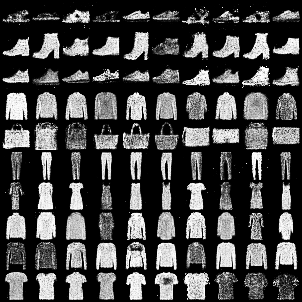}
  \end{subfigure}
  \vspace{-0.1cm}
  \caption{\textbf{Left Figure, Top Row:} Standard GAN training on a toy 2D mixture of 8 Gaussians. \textbf{Left Figure, Bottom Row:} Stackelberg GAN training with 8 generator ensembles, each of which is denoted by one color. \textbf{Right Figure:} Stackelberg GAN training with 10 generator ensembles on fashion-MNIST dataset without cherry pick, where each row corresponds to one generator.}
  \label{figure: Stackelberg GAN stabilizes the training procedure on a toy 2D mixture of 8 Gaussians}
  \vspace{-0.2cm}
\end{figure}

\noindent{\textbf{Our Contributions.}} This paper tackles the problem of instability during the GAN training procedure with both theoretical and experimental results. We study this problem by new architecture design.

\begin{itemize}
\item
We propose the Stackelberg GAN framework of multiple generators in the GAN architecture. Our framework is general since it can be applied to all variants of GANs, e.g., vanilla GAN, Wasserstein GAN, etc. It is built upon the idea of \emph{jointly} optimizing an ensemble of GAN losses w.r.t. all pairs of discriminator and generator.

\textit{Differences from prior work.} Although the idea of having multiple generators in the GAN architecture is not totally new, e.g., MIX+GAN~\cite{arora2017generalization}, MGAN~\cite{hoang2018mgan}, MAD-GAN~\cite{ghosh2017multi} and GMAN~\cite{durugkar2016generative}, there are key differences between Stackelberg GAN and prior work. a) In MGAN~\cite{hoang2018mgan} and MAD-GAN~\cite{ghosh2017multi}, various generators are combined as a mixture of probabilistic models with assumption that the generators and discriminator have infinite capacity. Also, they require that the generators share common network parameters. In contrast, in the Stackelberg GAN model we allow various sampling schemes beyond the mixture model, e.g., each generator samples a fixed but unequal number of data points independently. Furthermore, each generator has free parameters. We also make no assumption on the model capacity in our analysis. This is an important research question as raised by \cite{arora2017gans}. b) In MIX+GAN~\cite{arora2017generalization}, the losses are ensembled with learned weights and an extra regularization term, which discourages the weights being too far away from uniform. We find it slightly unnecessary because the expressive power of each generator already allows \emph{implicit scaling} of each generator. In the Stackelberg GAN, we apply equal weights for all generators and obtain improved guarantees. c) In GMAN~\cite{durugkar2016generative}, there are multiple discriminators while it is unclear in theory why multi-discriminator architecture works well. In this paper, we provide formal guarantees for our model.

\vspace{-0.2cm}
\item
We prove that the minimax duality gap shrinks as the number of generators increases (see Theorem \ref{theorem: duality gap} and Corollary \ref{corollary: duality gap}). Unlike the previous work, our result has no assumption on the expressive power of generators and discriminator, but instead depends on their non-convexity. With extra condition on the expressive power of generators, we show that Stackelberg GAN is able to achieve $\epsilon$-approximate equilibrium with $\widetilde\cO(1/\epsilon)$ generators (see Theorem \ref{theorem: approximate equilibrium}). This improves over the best-known result in \cite{arora2017generalization} which requires $\widetilde\cO(1/\epsilon^2)$ generators. At the core of our techniques is a novel application of the Shapley-Folkman lemma to the \emph{generic} minimax problem, where in the literature the shrinked duality gap was only known to happen when the objective function is restricted to the Lagrangian function of a constrained optimization problem~\cite{zhang2018deep,balcan2018matrix}. This results in tighter bounds than that of the covering number argument as in \cite{arora2017generalization}. We also note that MIX+GAN is a heuristic model which does not exactly match the theoretical analysis in \cite{arora2017generalization}, while this paper provides formal guarantees for the exact model of Stackelberg GAN.

\vspace{-0.2cm}
\item
We empirically study the performance of Stackelberg GAN for various synthetic and real datasets. We observe that without any human assignment, surprisingly, each generator automatically learns balanced number of modes without any mode being dropped (see Figure \ref{figure: Stackelberg GAN stabilizes the training procedure on a toy 2D mixture of 8 Gaussians}). Compared with other multi-generator GANs with the same network capacity, our experiments show that Stackelberg GAN enjoys $26.76$ Fr\'echet Inception Distance on CIFAR-10 dataset while prior results achieve $31.34$ (smaller is better), achieving an improvement of $14.61\%$.
\end{itemize}

\section{Stackelberg GAN}
Before proceeding, we define some notations and formalize our model setup in this section.

\begin{figure}[t]
\centering
\includegraphics[scale=0.35]{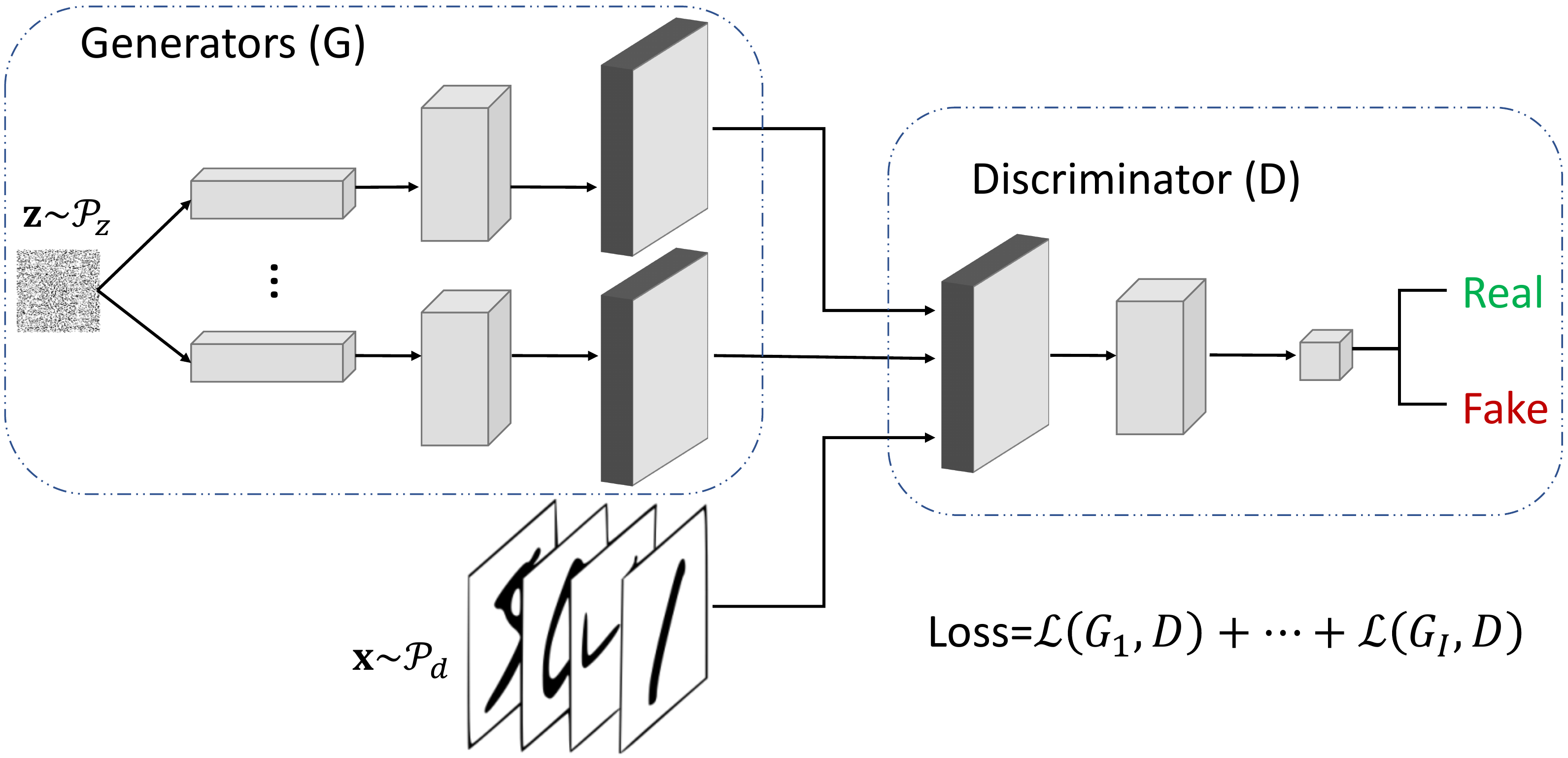}
\caption{Architecture of Stackelberg GAN. We ensemble the losses of various generator and discriminator pairs with equal weights.}
\label{figure: architecture of Stackelberg GAN}
\vspace{-0.5cm}
\end{figure}

\medskip
\noindent{\textbf{Notations.}}
We will use bold lower-case letter to represent vector and lower-case letter to represent scalar. Specifically, we denote by $\theta\in\R^t$ the parameter vector of discriminator and $\gamma\in\R^g$ the parameter vector of generator. Let $D_\theta(\x)$ be the output probability of discriminator given input $\x$, and let $G_\gamma(\z)$ represent the generated vector given random input $\z$. For any function $f(\u)$, we denote by $f^*(\v):=\sup_\u\{\u^T\v-f(\u)\}$ the conjugate function of $f$. Let $\breve{\cl}f$ be the convex closure of $f$, which is defined as the function whose epigraph is the convex closed hull of that of function $f$. We define $\widehat{\cl}f:=-\breve{\cl}(-f)$. We will use $I$ to represent the number of generators.

\subsection{Model Setup}
\vspace{-0.2cm}
\medskip
\noindent{\textbf{Preliminaries.}} The key ingredient in the standard GAN is to play a \emph{zero-sum two-player} game between a discriminator and a generator --- which are often parametrized by deep neural networks in practice --- such that the goal of the generator is to map random noise $\z$ to some plausible images/texts $G_\gamma(\z)$ and the discriminator $D_\theta(\cdot)$ aims at distinguishing the real images/texts from what the generator creates.

For every parameter implementations $\gamma$ and $\theta$ of generator and discriminator, respectively, denote by the payoff value $$\phi(\gamma;\theta):=\bbE_{\x\sim \cP_d} f(D_\theta(\x))+\bbE_{\z\sim \cP_z} f(1-D_\theta (G_\gamma(\z))),$$
where $f(\cdot)$ is some concave, increasing function. Hereby, $\cP_d$ is the distribution of true images/texts and $\cP_z$ is a noise distribution such as Gaussian or uniform distribution. The standard GAN thus solves the following saddle point problems:
\begin{equation}
\label{equ: GAN}
\inf_{\gamma\in\R^g}\sup_{\theta\in\R^t} \phi(\gamma;\theta),\qquad\text{or}\qquad \sup_{\theta\in\R^t}\inf_{\gamma\in\R^g} \phi(\gamma;\theta).
\end{equation}
For different choices of function $f$, problem \eqref{equ: GAN} leads to various variants of GAN. For example, when $f(t)=\log t$, problem \eqref{equ: GAN} is the classic GAN; when $f(t)=t$, it reduces to the Wasserstein GAN. We refer interested readers to the paper of \cite{nowozin2016f} for more variants of GANs.

\medskip
\noindent{\textbf{Stackelberg GAN.}}
Our model of Stackelberg GAN is inspired from the Stackelberg competition in the domain of game theory. Instead of playing a two-player game as in the standard GAN, in Stackelberg GAN there are $I+1$ players with two firms --- one discriminator and $I$ generators. One can make an analogy between the discriminator (generators) in the Stackelberg GAN and the leader (followers) in the Stackelberg competition.

Stackelberg GAN is a general framework which can be built on top of all variants of standard GANs. The objective function is simply an ensemble of losses w.r.t. all possible pairs of generators and discriminator:
$\Phi(\gamma_1,...,\gamma_I;\theta):=\sum_{i=1}^I \phi(\gamma_i;\theta)$.
Thus it is very easy to implement.
The Stackelberg GAN therefore solves the following saddle point problems:
\begin{equation*}
w^*:=\inf_{\gamma_1,...,\gamma_I\in\R^g} \sup_{\theta\in\R^t} \frac{1}{I}\Phi(\gamma_1,...,\gamma_I;\theta),\qquad\text{or}\qquad
q^*:=\sup_{\theta\in\R^t} \inf_{\gamma_1,...,\gamma_I\in\R^g}  \frac{1}{I}\Phi(\gamma_1,...,\gamma_I;\theta).
\end{equation*}
We term $w^*-q^*$ the \emph{minimax (duality) gap}. We note that there are key differences between the na\"ive ensembling model and ours. In the na\"ive ensembling model, one trains multiple GAN models \emph{independently} and averages their outputs. In contrast, our Stackelberg GAN shares a unique discriminator for various generators, thus requires \emph{jointly training}. Figure \ref{figure: architecture of Stackelberg GAN} shows the architecture of our Stackelberg GAN.

\medskip
\noindent{\textbf{How to generate samples from Stackelberg GAN?}} In the Stackelberg GAN, we expect that each generator learns only a few modes. In order to generate a sample that may come from all modes, we use a mixed model. In particular, we generate a uniformly random value $i$ from $1$ to $I$ and use the $i$-th generator to obtain a new sample. Note that this procedure in independent of the training procedure.

\section{Analysis of Stackelberg GAN}

In this section, we develop our theoretical contributions and compare our results with the prior work.

\subsection{Minimax Duality Gap}
\label{section: minimax duality gap}
We begin with studying the minimax gap of Stackelberg GAN. Our main results show that the minimax gap shrinks as the number of generators increases.

To proceed, denote by
$h_i(\u_i):=\inf_{\gamma_i\in\R^g}(-\phi(\gamma_i;\cdot))^*(\u_i),$
where the conjugate operation is w.r.t. the second argument of $\phi(\gamma_i;\cdot)$. We clarify here that the subscript $i$ in $h_i$ indicates that the function $h_i$ is derived from the $i$-th generator. The argument of $h_i$ should depend on $i$, so we denote it by $\u_i$. Intuitively, $h_i$ serves as an approximate convexification of $-\phi(\gamma_i,\cdot)$ w.r.t the second argument due to the conjugate operation.
Denote by $\breve{\cl}h_i$ the convex closure of $h_i$:
$$\breve{\cl}h_i(\widetilde\u):=\inf_{\{a^j\},\{\u_i^j\}}\left\{\sum_{j=1}^{t+2} a^j h_i(\u_i^j): \widetilde\u=\sum_{j=1}^{t+2} a^j\u_i^j,\sum_{j=1}^{t+2}a^j=1,a^j\ge 0\right\}.$$
$\breve{\cl}h_i$ represents the convex relaxation of $h_i$ because the epigraph of $\breve{\cl}h_i$ is exactly the convex hull of epigraph of $h_i$ by the definition of $\breve{\cl}h_i$.
Let
$$
\Delta_\theta^{\minimax}=\inf_{\gamma_1,...,\gamma_I\in\R^g}\sup_{\theta\in\R^t}    \frac{1}{I}\Phi(\gamma_1,...,\gamma_I;\theta)-\inf_{\gamma_1,...,\gamma_I\in\R^g}\sup_{\theta\in\R^t} \frac{1}{I}\widetilde\Phi(\gamma_1,...,\gamma_I;\theta),
$$
and
$$
\Delta_\theta^{\maximin}=\sup_{\theta\in\R^t}          \inf_{\gamma_1,...,\gamma_I\in\R^g} \frac{1}{I}\widetilde\Phi(\gamma_1,...,\gamma_I;\theta)-\sup_{\theta\in\R^t}\inf_{\gamma_1,...,\gamma_I\in\R^g} \frac{1}{I}\Phi(\gamma_1,...,\gamma_I;\theta),
$$
where $\widetilde\Phi(\gamma_1,...,\gamma_I;\theta):  =\sum_{i=1}^I \widehat{\cl}\phi(\gamma_i;\theta)$ and $-\widehat{\cl}\phi(\gamma_i;\theta)$ is the convex closure of $-\phi(\gamma_i;\theta)$ w.r.t. argument $\theta$. Therefore, $\Delta_\theta^{\maximin}+\Delta_\theta^{\minimax}$ measures the non-convexity of objective function w.r.t. argument $\theta$. For example, it is equal to $0$ if and only if $\phi(\gamma_i;\theta)$ is concave and closed w.r.t. discriminator parameter $\theta$.

We have the following guarantees on the minimax gap of Stackelberg GAN.

\begin{theorem}
\label{theorem: duality gap}
Let $\Delta_\gamma^i:=\sup_{\u\in\R^t}\{h_i(\u)-\breve{\cl}h_i(\u)\}\ge 0$ and $\Delta_\gamma^{\worst}:=\max_{i\in[I]}\Delta_\gamma^i$. Denote by $t$ the number of parameters of discriminator, i.e., $\theta\in\R^t$. Suppose that $h_i(\cdot)$ is continuous and $\mathsf{dom} h_i$ is compact and convex. Then the duality gap can be bounded by
\begin{equation*}
0\le w^*-q^*\le \Delta_\theta^{\minimax}+\Delta_\theta^{\maximin}+\epsilon,
\end{equation*}
provided that the number of generators $I> \frac{t+1}{\epsilon}\Delta_{\gamma}^{\worst}$.
\end{theorem}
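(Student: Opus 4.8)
The plan is to first strip away the non-concavity of the payoff in the discriminator parameter $\theta$ --- which is precisely what the two error terms $\Delta_\theta^{\minimax}$ and $\Delta_\theta^{\maximin}$ are there to absorb --- and then to kill the remaining non-convexity in the generator parameters with the Shapley-Folkman lemma, applied in the $(t+1)$-dimensional ``graph space'' of the functions $h_i$. Concretely, set $\widetilde w^*:=\inf_{\gamma_1,\dots,\gamma_I}\sup_\theta\frac1I\widetilde\Phi$ and $\widetilde q^*:=\sup_\theta\inf_{\gamma_1,\dots,\gamma_I}\frac1I\widetilde\Phi$. From the definitions of $\Delta_\theta^{\minimax}$ and $\Delta_\theta^{\maximin}$ one has the exact identity $w^*-q^*=\Delta_\theta^{\minimax}+\Delta_\theta^{\maximin}+(\widetilde w^*-\widetilde q^*)$, while $w^*\ge q^*$ by the usual $\inf\sup\ge\sup\inf$ inequality (this is the left-hand bound of the theorem). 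So the theorem reduces to showing $\widetilde w^*-\widetilde q^*\le\epsilon$ under the stated lower bound on $I$. The point of passing to $\widetilde\Phi$ is that $-\widehat{\cl}\phi(\gamma_i;\cdot)$ is closed and convex, so all the conjugate-duality identities below hold with equality rather than mere inequality.

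Next I would obtain dual descriptions of $\widetilde w^*$ and $\widetilde q^*$ in terms of the $h_i$. Since conjugation is insensitive to the closed convex hull, $(-\widehat{\cl}\phi(\gamma_i;\cdot))^*=(-\phi(\gamma_i;\cdot))^*$, and biconjugation of the closed convex function $-\widehat{\cl}\phi(\gamma_i;\cdot)$ gives $\inf_{\gamma_i}\widehat{\cl}\phi(\gamma_i;\theta)=\inf_{\u}\{h_i(\u)-\u^T\theta\}=-h_i^*(\theta)$, whence $\widetilde q^*=-\frac1I\inf_\theta\sum_i h_i^*(\theta)$. Because $\dom h_i$ is compact, each $h_i^*=(\breve{\cl}h_i)^*$ is finite on all of $\R^t$, so Fenchel strong duality converts this into $\widetilde q^*=\frac1I\inf\{\sum_i\breve{\cl}h_i(\u_i):\sum_i\u_i=\0\}$. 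For $\widetilde w^*$ only the easy (weak-duality) direction is needed: for any $\theta$ and any feasible $\u_1,\dots,\u_I$ (i.e.\ $\sum_i\u_i=\0$) one has $\sum_i\widehat{\cl}\phi(\gamma_i;\theta)=\sum_i(\widehat{\cl}\phi(\gamma_i;\theta)+\u_i^T\theta)\le\sum_i(-\widehat{\cl}\phi(\gamma_i;\cdot))^*(\u_i)$; taking $\sup_\theta$ on the left, then $\inf$ over feasible $\u$ on the right, then $\inf_{\gamma_1,\dots,\gamma_I}$ on both sides, and exchanging the two infima, yields $\widetilde w^*\le\frac1I\inf\{\sum_i h_i(\u_i):\sum_i\u_i=\0\}$.

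The core step is Shapley-Folkman. Let $\bar p^*:=\inf\{\sum_i\breve{\cl}h_i(\u_i):\sum_i\u_i=\0\}$, attained at some $\u_1^*,\dots,\u_I^*$ (attained since $h_i$ is continuous with compact domain). Each pair $(\u_i^*,\breve{\cl}h_i(\u_i^*))$ lies in $\conv E_i$, where $E_i:=\{(\u,h_i(\u)):\u\in\dom h_i\}\subseteq\R^{t+1}$ --- this is exactly the Carath\'eodory representation of $\breve{\cl}h_i$ written in the excerpt with $t+2$ points. Hence $(\0,\bar p^*)=\sum_i(\u_i^*,\breve{\cl}h_i(\u_i^*))\in\sum_i\conv E_i=\conv(\sum_i E_i)$, and Shapley-Folkman in $\R^{t+1}$ produces another representation $(\0,\bar p^*)=\sum_i(\hat\u_i,\hat r_i)$ with every $(\hat\u_i,\hat r_i)\in\conv E_i$ and with $(\hat\u_i,\hat r_i)\in E_i$ --- i.e.\ $\hat r_i=h_i(\hat\u_i)$ --- for all but at most $t+1$ indices. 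The tuple $(\hat\u_i)$ is feasible, so $\inf\{\sum_i h_i(\u_i):\sum_i\u_i=\0\}\le\sum_i h_i(\hat\u_i)$; replacing $h_i(\hat\u_i)$ by $\hat r_i$ on the non-exceptional indices (where they coincide) and bounding $h_i(\hat\u_i)-\hat r_i\le h_i(\hat\u_i)-\breve{\cl}h_i(\hat\u_i)\le\Delta_\gamma^{\worst}$ on the at most $t+1$ exceptional ones gives $\inf\{\sum_i h_i(\u_i):\sum_i\u_i=\0\}\le\bar p^*+(t+1)\Delta_\gamma^{\worst}$. Dividing by $I$ and combining with the two dual formulas yields $\widetilde w^*-\widetilde q^*\le\frac{t+1}{I}\Delta_\gamma^{\worst}$, which is below $\epsilon$ exactly when $I>\frac{t+1}{\epsilon}\Delta_\gamma^{\worst}$; feeding this back into the identity of the first paragraph completes the proof.

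I expect the delicate point to be the second paragraph rather than the Shapley-Folkman count: one must rigorously justify the exchange of $\sup_\theta$ and $\inf_{\u}$ that produces the dual formula for $\widetilde q^*$, and this is where the hypotheses ``$h_i$ continuous, $\dom h_i$ compact and convex'' are genuinely used --- they force the conjugates to be finite everywhere (so no relative-interior/constraint-qualification issue can block Fenchel strong duality) and guarantee attainment of the various infima. A secondary point of care is keeping the closure operations straight so that the dimension count in Shapley-Folkman yields precisely $t+1$ (matching the hypothesis on the number of generators) rather than $t$ or $t+2$.
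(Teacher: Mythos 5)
Your proposal is correct and follows essentially the same route as the paper: the identical telescoping decomposition of $w^*-q^*$ into $\Delta_\theta^{\minimax}+\Delta_\theta^{\maximin}$ plus the gap for $\widetilde\Phi$, the same reduction of that gap to the difference between $\inf\{\sum_i h_i(\u_i):\sum_i\u_i=\0\}$ and its convexified counterpart via conjugate/infimal-convolution identities, and the same application of Shapley--Folkman to the graph sets $\cY_i\subseteq\R^{t+1}$ with the $(t+1)\Delta_\gamma^{\worst}$ count. The only cosmetic difference is that you phrase the dual formulas via Fenchel duality on $\sum_i h_i^*$ and a one-sided weak-duality bound for $\widetilde w^*$, where the paper routes everything through the perturbation function $p(\u)$ and its convex closure at $\0$.
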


\begin{remark}
Theorem \ref{theorem: duality gap} makes mild assumption on the continuity of loss and no assumption on the model capacity of discriminator and generators. The analysis instead depends on their non-convexity as being parametrized by deep neural networks. In particular, $\Delta_\gamma^i$ measures the divergence between the function value of $h_i$ and its convex relaxation $\breve\cl h_i$; When $\phi(\gamma_i;\theta)$ is convex w.r.t. argument $\gamma_i$, $\Delta_\gamma^i$ is exactly $0$. The constant $\Delta_\gamma^\worst$ is the maximal divergence among all generators, which does not grow with the increase of $I$. This is because $\Delta_\gamma^\worst$  measures the divergence of only \emph{one generator} and when each generator for example has the same architecture, we have $\Delta_\gamma^\worst=\Delta_\gamma^1=...=\Delta_\gamma^I$. Similarly, the terms $\Delta_\theta^{\minimax}$ and $\Delta_\theta^{\maximin}$ characterize the non-convexity of discriminator. When the discriminator is concave such as logistic regression and support vector machine, $\Delta_\theta^{\minimax}=\Delta_\theta^{\maximin}=0$ and we have the following straightforward corollary about the minimax duality gap of Stackelberg GAN.
\end{remark}

\begin{corollary}
\label{corollary: duality gap}
Under the settings of Theorem \ref{theorem: duality gap}, when $\phi(\gamma_i;\theta)$ is concave and closed w.r.t. discriminator parameter $\theta$ and the number of generators $I> \frac{t+1}{\epsilon}\Delta_{\gamma}^{\worst}$, we have
$0\le w^*-q^*\le \epsilon$.
\end{corollary}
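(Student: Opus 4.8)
The plan is to derive this corollary directly from Theorem \ref{theorem: duality gap} by showing that, under the stated hypothesis, the two non-convexity measures $\Delta_\theta^{\minimax}$ and $\Delta_\theta^{\maximin}$ both vanish, so the bound $0\le w^*-q^*\le \Delta_\theta^{\minimax}+\Delta_\theta^{\maximin}+\epsilon$ collapses to $0\le w^*-q^*\le\epsilon$.

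First I would unwind the definition of $\widehat{\cl}\phi(\gamma_i;\theta)$. By construction $\widehat{\cl}\phi(\gamma_i;\theta)=-\breve{\cl}(-\phi(\gamma_i;\theta))$, where the convex closure $\breve{\cl}$ is taken with respect to the argument $\theta$. If $\phi(\gamma_i;\cdot)$ is concave and closed, then $-\phi(\gamma_i;\cdot)$ is convex and closed, hence coincides with its own convex closure; therefore $\breve{\cl}(-\phi(\gamma_i;\theta))=-\phi(\gamma_i;\theta)$ and so $\widehat{\cl}\phi(\gamma_i;\theta)=\phi(\gamma_i;\theta)$ for every $\gamma_i$ and every $\theta$.

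Next, summing over $i\in[I]$ gives $\widetilde\Phi(\gamma_1,\dots,\gamma_I;\theta)=\sum_{i=1}^I\widehat{\cl}\phi(\gamma_i;\theta)=\sum_{i=1}^I\phi(\gamma_i;\theta)=\Phi(\gamma_1,\dots,\gamma_I;\theta)$ as functions. Substituting $\widetilde\Phi=\Phi$ into the definitions of $\Delta_\theta^{\minimax}$ and $\Delta_\theta^{\maximin}$, each of them becomes a difference of two identical quantities, so $\Delta_\theta^{\minimax}=\Delta_\theta^{\maximin}=0$.

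Finally I would invoke Theorem \ref{theorem: duality gap}: its remaining hypotheses (continuity of each $h_i$ and compactness and convexity of $\dom h_i$) are inherited verbatim from the theorem's setting, and with $I>\frac{t+1}{\epsilon}\Delta_\gamma^{\worst}$ the theorem yields $0\le w^*-q^*\le \Delta_\theta^{\minimax}+\Delta_\theta^{\maximin}+\epsilon=\epsilon$, which is the claim. I do not expect a genuine obstacle here; the only point that deserves care is the elementary convex-analysis fact that a closed concave function equals its concave closure, which is precisely where the \emph{closedness} assumption, and not merely concavity, is needed in order to exclude pathological functions whose closure could differ from them on the boundary of the domain.
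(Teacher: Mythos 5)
Your proof is correct and follows essentially the same route as the paper: the paper's own argument likewise observes that closedness and concavity of $\phi(\gamma_i;\cdot)$ give $\widehat{\cl}\phi=\phi$, hence $\widetilde\Phi=\Phi$, forcing $\Delta_\theta^{\minimax}=\Delta_\theta^{\maximin}=0$, and then applies Theorem \ref{theorem: duality gap}. Your version simply spells out the intermediate steps in more detail; there is no gap.
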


\subsection{Existence of Approximate Equilibrium}

The results of Theorem \ref{theorem: duality gap} and Corollary \ref{corollary: duality gap} are independent of model capacity of generators and discriminator. 
When we make assumptions on the expressive power of generator as in \cite{arora2017generalization}, we have the following guarantee \eqref{equ: approximate equilibrium} on the existence of $\epsilon$-approximate equilibrium.

\begin{theorem}
\label{theorem: approximate equilibrium}
Under the settings of Theorem \ref{theorem: duality gap}, suppose that for any $\xi>0$, there exists a generator $G$ such that $\bbE_{\x\sim\cP_d,\z\sim \cP_\z}\|G(\z)-\x\|_2\le\xi$. Let the discriminator and generators be $L$-Lipschitz w.r.t. inputs and parameters, and let $f$ be $L_f$-Lipschitz. Then for any $\epsilon>0$, there exist $I=\frac{t+1}{\epsilon}\Delta_\gamma^{\worst}$ generators $G_{\gamma_1^*},...,G_{\gamma_I^*}$ and a discriminator $D_{\theta^*}$ such that for some value $V\in\R$,
\begin{equation}
\label{equ: approximate equilibrium}
\begin{split}
&\forall \gamma_1,...,\gamma_I\in\R^g,\quad \Phi(\gamma_1,...,\gamma_I;\theta^*)\le V+\epsilon,\\
&\forall \theta\in\R^t,\qquad\qquad\ \Phi(\gamma_1^*,...,\gamma_I^*;\theta)\ge V-\epsilon.\\
\end{split}
\end{equation}
\end{theorem}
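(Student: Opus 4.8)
The plan is to read \eqref{equ: approximate equilibrium} as the existence of a pure $\epsilon$-approximate equilibrium and to obtain it by splitting the (now small) duality gap of Theorem~\ref{theorem: duality gap} around a value $V$ that the expressiveness hypothesis pins down. Throughout I work with the normalized payoff $\frac1I\Phi$ and write $\gamma=(\gamma_1,\dots,\gamma_I)$; accordingly $V$ is a per-generator value and \eqref{equ: approximate equilibrium} is read for $\frac1I\Phi$ (equivalently, scale $V$ by $I$ at the end), the same normalization used to define $w^*$ and $q^*$.

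First I would pin down $V$. Fix $\xi>0$ and, by the expressiveness hypothesis, a generator $G$ with $\bbE_{\x\sim\cP_d,\z\sim\cP_\z}\|G(\z)-\x\|_2\le\xi$. Coupling $\x$ and $\z$ as in that expectation, and using that $D_\theta$ is $L$-Lipschitz in its input and $f$ is $L_f$-Lipschitz, gives $|\bbE\,f(1-D_\theta(G(\z)))-\bbE\,f(1-D_\theta(\x))|\le L_fL\xi$ for every $\theta$; hence, by concavity of $f$ ($f(a)+f(1-a)\le 2f(\tfrac12)$), $\phi(\gamma_G;\theta)\le \bbE_\x[f(D_\theta(\x))+f(1-D_\theta(\x))]+L_fL\xi\le 2f(\tfrac12)+L_fL\xi$ uniformly in $\theta$. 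Taking all $I$ generators equal to $G$ yields $w^*\le 2f(\tfrac12)+L_fL\xi<\infty$, so $V:=w^*$ is finite and within $\cO(L_fL\xi)$ of $2f(\tfrac12)$.

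Next I would invoke Theorem~\ref{theorem: duality gap} with $I=\frac{t+1}{\epsilon}\Delta_\gamma^{\worst}$ to get $0\le w^*-q^*\le\epsilon$ (the terms $\Delta_\theta^{\minimax}$, $\Delta_\theta^{\maximin}$ vanish when the discriminator is concave and closed, as in Corollary~\ref{corollary: duality gap}, and are otherwise absorbed into $\epsilon$), hence $q^*\ge V-\epsilon$. Using the hypotheses carried over from Theorem~\ref{theorem: duality gap}---$h_i$ continuous, $\dom h_i$ compact and convex---the relevant extrema are attained, so I may pick $\gamma^*=(\gamma_1^*,\dots,\gamma_I^*)$ with $\sup_\theta\frac1I\Phi(\gamma^*;\theta)=w^*=V$ and $\theta^*$ with $\inf_\gamma\frac1I\Phi(\gamma;\theta^*)=q^*$ (near-optimizers cost only an extra $o(1)$ if attainment fails). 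Then $\frac1I\Phi(\gamma^*;\theta)\le V$ for all $\theta$ and $\frac1I\Phi(\gamma;\theta^*)\ge q^*\ge V-\epsilon$ for all $\gamma$, which is \eqref{equ: approximate equilibrium}; finally let $\xi\to0$, legitimate since the architecture is assumed expressive for every $\xi>0$.

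The hard part is not this bookkeeping but making the appeal to Theorem~\ref{theorem: duality gap} rigorous: one must verify that the conjugate-derived $h_i$ really are continuous with compact convex domain in the GAN setting, and---crucially---that $w^*$ is finite and not spoiled by the unbounded parameter set $\theta\in\R^t$, which is exactly what the expressiveness assumption and the Lipschitz bounds buy in the first step (so the expressiveness hypothesis, absent from Theorem~\ref{theorem: duality gap}, is doing real work here). A secondary subtlety is normalization: Theorem~\ref{theorem: duality gap} controls the gap of $\frac1I\Phi$, not of $\Phi$, so \eqref{equ: approximate equilibrium} must be understood for the averaged objective (equivalently with $V$ scaled by $I$); a verbatim statement for $\Phi$ would need a gap of order $\epsilon/I$, which the theorem does not give.
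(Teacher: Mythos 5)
Your proof follows essentially the same route as the paper's: both first pin $V$ to (approximately) $2f(1/2)$ by combining the expressiveness hypothesis with the Lipschitz bounds on $f$, $D_\theta$ and $G$, then invoke the duality-gap machinery of Theorem~\ref{theorem: duality gap}/Corollary~\ref{corollary: duality gap} with $I=\frac{t+1}{\epsilon}\Delta_\gamma^{\worst}$ to get $w^*-q^*\le\epsilon$, and finally read off the equilibrium from (near-)optimizers of the minimax and maximin problems. Two points of divergence are worth flagging. First, where you propose to ``absorb'' $\Delta_\theta^{\minimax}+\Delta_\theta^{\maximin}$ into $\epsilon$, the paper instead restricts attention to a closed neighbourhood $\Theta\times\Gamma$ of the constant-$1/2$ discriminator on which $\phi$ is concave in $\theta$, so that Corollary~\ref{corollary: duality gap} applies with those terms exactly zero; your absorption would otherwise change the required number of generators, so the localization (hand-wavy as it is in the paper) is doing real work and cannot simply be skipped. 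Second, the saddle-point inequalities you actually derive --- $\Phi(\gamma^*;\theta)\le V$ for all $\theta$ and $\Phi(\gamma;\theta^*)\ge V-\epsilon$ for all $\gamma$ --- are the transposes of the ones printed in \eqref{equ: approximate equilibrium}; yours are the standard approximate-saddle conditions for $\inf_\gamma\sup_\theta$ and match what the paper's own (terse) final step supports, so the printed quantifier placement appears to be a typo rather than a gap in your argument. Your normalization caveat ($\Phi$ versus $\frac{1}{I}\Phi$) is likewise a genuine issue with the statement as written that the paper's proof does not address.
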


\noindent{\textbf{Related Work.}} While many efforts have been devoted to empirically investigating the performance of multi-generator GAN, little is known 
about how many generators are needed so as to achieve certain equilibrium guarantees. Probably the most relevant prior work to Theorem \ref{theorem: approximate equilibrium} is that of \cite{arora2017generalization}. In particular, \cite{arora2017generalization} showed that there exist $I=\frac{100t}{\epsilon^2}\Delta^2$ generators and one discriminator such that $\epsilon$-approximate equilibrium can be achieved, provided that for \emph{all} $\x$ and any $\xi>0$, there exists a generator $G$ such that $\bbE_{\z\sim \cP_\z}\|G(\z)-\x\|_2\le\xi$. Hereby, $\Delta$ is a global upper bound of function $|f|$, i.e., $f\in[-\Delta,\Delta]$. In comparison, Theorem \ref{theorem: approximate equilibrium} improves over this result in two aspects: a) the assumption on the expressive power of generators in \cite{arora2017generalization} implies our condition $\bbE_{\x\sim\cP_d,\z\sim \cP_\z}\|G(\z)-\x\|_2\le\xi$. Thus our assumption is weaker. b) The required number of generators in Theorem \ref{theorem: approximate equilibrium} is as small as $\frac{t+1}{\epsilon}\Delta_\gamma^\worst$. We note that $\Delta_\gamma^\worst\ll2\Delta$ by the definition of $\Delta_\gamma^\worst$. Therefore, Theorem \ref{theorem: approximate equilibrium} requires much fewer generators than that of \cite{arora2017generalization}.

\vspace{-0.2cm}
\section{Architecture, Capacity and Mode Collapse/Dropping}
\vspace{-0.2cm}
In this section, we empirically investigate the effect of network architecture and capacity on the mode collapse/dropping issues for various multi-generator architecture designs. Hereby, the \emph{mode dropping} refers to the phenomenon that generative models simply ignore some hard-to-represent modes of real distributions, and the \emph{mode collapse} means that some modes of real distributions are "averaged" by generative models. For GAN, it is widely believed that the two issues are caused by the large gap between the minimax and maximin objective function values (see page 35, \cite{goodfellow2016nips}).

Our experiments verify that network capacity (change of width and depth) is not very crucial for resolving the mode collapse issue, though it can alleviate the mode dropping in certain senses. Instead, the choice of architecture of generators plays a key role. To visualize this discovery, we test the performance of varying architectures of GANs on a synthetic mixture of Gaussians dataset with 8 modes and 0.01 standard deviation. We observe the following phenomena:

\vspace{-0.15cm}
\medskip
\noindent{\textbf{Na\"ively increasing capacity of one-generator architecture does not alleviate mode collapse.}}
It shows that the multi-generator architecture in the Stackelberg GAN effectively alleviates the mode collapse issue. Though na\"ively increasing capacity of one-generator architecture alleviates mode dropping issue, for more challenging mode collapse issue, the effect is not obvious (see Figure \ref{figure: one-generator architecture and Stackelberg GAN}).

\begin{figure}[th]
\vspace{-0.3cm}
\centering
  \begin{subfigure}{0.24\textwidth}
    \includegraphics[width=\textwidth]{mG_base_199.png}
    \caption{GAN with 1 generator of architecture 2-128-2.}
  \end{subfigure}
  \hspace{+0.4cm}
  \begin{subfigure}{0.24\textwidth}
    \includegraphics[width=\textwidth]{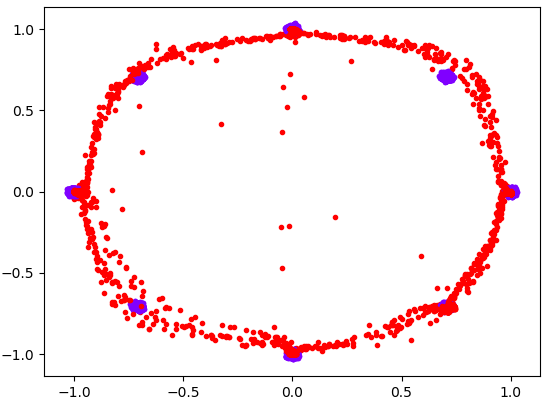}
    \caption{GAN with 1 generator of architecture 2-128-256-512-1024-2.}
  \end{subfigure}
  \hspace{+0.4cm}
  \begin{subfigure}{0.24\textwidth}
    \includegraphics[width=\textwidth]{mG_Ours_199.png}
    \caption{Stackelberg GAN with 8 generators of architecture 2-16-2.}
  \end{subfigure}
  \vspace{-0.3cm}
  \caption{Comparison of mode collapse/dropping issue of one-generator and multi-generator architectures with varying model capacities. (a) and (b) show that increasing the model capacity can alleviate the mode dropping issue, though it does not alleviate the mode collapse issue. (c) Multi-generator architecture with even small capacity resolves the mode collapse issue.}
  \label{figure: one-generator architecture and Stackelberg GAN}
 \vspace{-0.4cm}
\end{figure}

\medskip
\noindent{\textbf{Stackelberg GAN outperforms multi-branch models.}}
We compare performance of multi-branch GAN and Stackelberg GAN with objective functions:
\begin{equation*}
\text{(Multi-Branch GAN)}\quad \phi\left(\frac{1}{I}\sum_{i=1}^I\gamma_i;\theta\right)\qquad \text{vs.}\qquad \text{(Stackelberg GAN)}\quad \frac{1}{I}\sum_{i=1}^I \phi(\gamma_i;\theta).
\end{equation*}
Hereby, the multi-branch GAN has made use of extra information that the real distribution is Gaussian mixture model with probability distribution function $\frac{1}{I}\sum_{i=1}^I p_{\cN_i}(\x)$, so that each $\gamma_i$ tries to fit one component.
However, even this we observe that with same model capacity, Stackelberg GAN significantly outperforms multi-branch GAN (see Figure \ref{figure: multi-branch and Stackelberg GAN} (a)(c)) even without access to the extra information. The performance of Stackelberg GAN is also better than multi-branch GAN of much larger capacity (see Figure \ref{figure: multi-branch and Stackelberg GAN} (b)(c)).

\begin{figure}[th]
\vspace{-0.4cm}
\centering
  \begin{subfigure}{0.26\textwidth}
    \includegraphics[width=\textwidth]{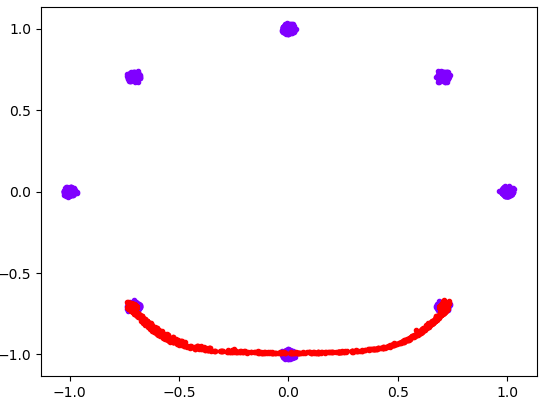}
    \caption{8-branch GAN with generator architecture 2-16-2.}
  \end{subfigure}
  \hspace{+0.4cm}
  \begin{subfigure}{0.26\textwidth}
    \includegraphics[width=\textwidth]{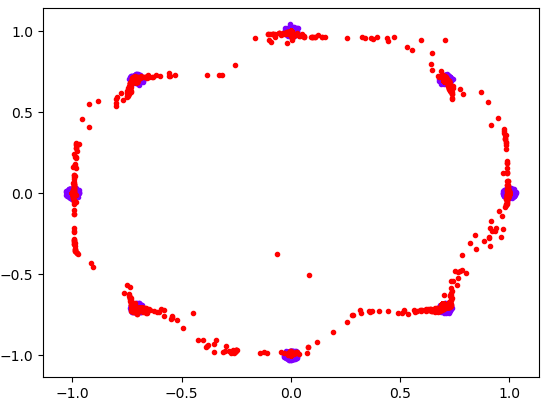}
    \caption{8-branch GAN with generator architecture 2-128-256-512-1024-2.}
  \end{subfigure}
  \hspace{+0.4cm}
  \begin{subfigure}{0.26\textwidth}
    \includegraphics[width=\textwidth]{mG_Ours_199.png}
    \caption{Stackelberg GAN with 8 generators of architecture 2-16-2.}
  \end{subfigure}
  \vspace{-0.2cm}
  \caption{Comparison of mode collapse issue of multi-branch and multi-generator architectures with varying model capacities. (a) and (b) show that increasing the model capacity can alleviate the mode dropping issue, though it does not alleviate the mode collapse issue. (c) Multi-generator architecture with much smaller capacity resolves the mode collapse issue.}
  \label{figure: multi-branch and Stackelberg GAN}
\vspace{-0.5cm}
\end{figure}

\medskip
\noindent{\textbf{Generators tend to learn balanced number of modes when they have same capacity.}}
We observe that for varying number of generators, each generator in the Stackelberg GAN tends to learn equal number of modes when the modes are symmetric and every generator has same capacity (see Figure \ref{figure: effect of number of generators}).

\begin{figure}[th]
\vspace{-0.3cm}
\centering
  \begin{subfigure}{0.26\textwidth}
    \includegraphics[width=\textwidth]{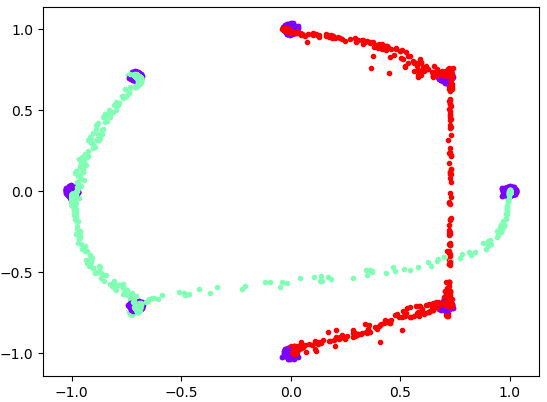}
    \caption{Two generators.}
  \end{subfigure}
  \hspace{+0.4cm}
  \begin{subfigure}{0.26\textwidth}
    \includegraphics[width=\textwidth]{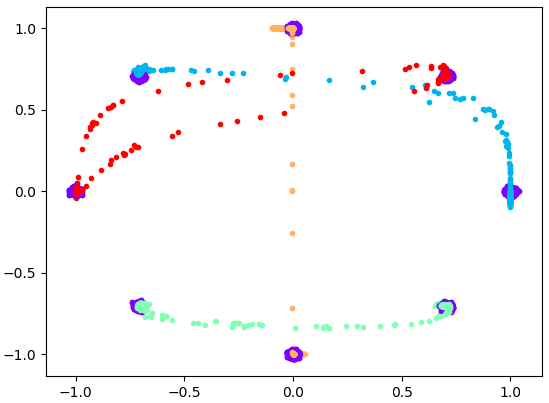}
    \caption{Four generators.}
  \end{subfigure}
  \hspace{+0.4cm}
  \begin{subfigure}{0.26\textwidth}
    \includegraphics[width=\textwidth]{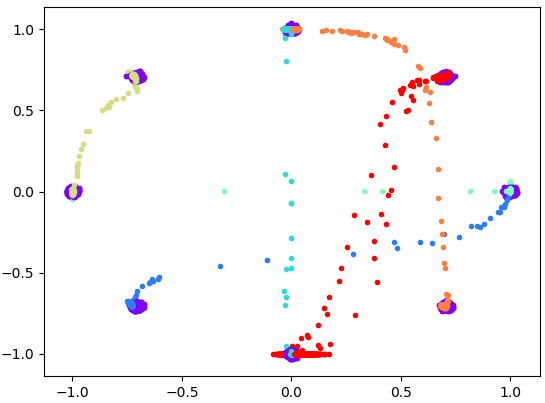}
    \caption{Six generators.}
  \end{subfigure}
  \vspace{-0.2cm}
  \caption{Stackelberg GAN with varying number of generators of architecture 2-128-256-512-1024-2.}
  \label{figure: effect of number of generators}
\vspace{-0.2cm}
\end{figure}

\section{Experiments}

In this section, we verify our theoretical contributions by the experimental validation.

\subsection{MNIST Dataset}
We first show that Stackelberg GAN generates more diverse images on the MNIST dataset~\cite{lecun1998gradient} than classic GAN. We follow the standard preprocessing step that each pixel is normalized via subtracting it by 0.5 and dividing it by $0.5$. The detailed network setups of discriminator and generators are in Table \ref{table: Architecture and hyperparameters for the MNIST dataset}.

Figure \ref{figure: experiments on MNIST dataset} shows the diversity of generated digits by Stackelberg GAN with varying number of generators. When there is only one generator, the digits are not very diverse with many "1"'s and much fewer "2"'s. As the number of generators increases, the images tend to be more diverse. In particular, for $10$-generator Stackelberg GAN, each generator is associated with one or two digits without any digit being missed.

\begin{figure}[t]
\centering
  \begin{subfigure}{0.3\textwidth}
    \includegraphics[width=\textwidth]{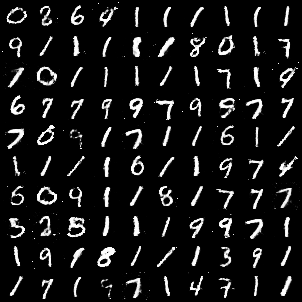}
  \end{subfigure}
  \begin{subfigure}{0.3\textwidth}
    \includegraphics[width=\textwidth]{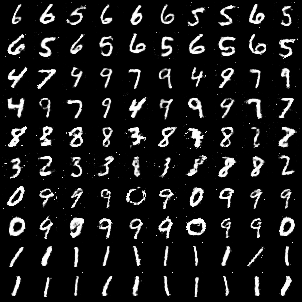}
  \end{subfigure}
  \begin{subfigure}{0.3\textwidth}
    \includegraphics[width=\textwidth]{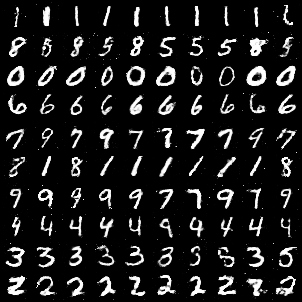}
  \end{subfigure}
  \caption{Standard GAN vs. Stackelberg GAN on the MNIST dataset without cherry pick. \textbf{Left Figure:} Digits generated by the standard GAN. It shows that the standard GAN generates many "1"'s which are not very diverse. \textbf{Middle Figure:} Digits generated by the Stackelberg GAN with 5 generators, where every two rows correspond to one generator. \textbf{Right Figure:} Digits generated by the Stackelberg GAN with 10 generators, where each row corresponds to one generator.}
  \label{figure: experiments on MNIST dataset}
\end{figure}

\subsection{Fashion-MNIST Dataset}
We also observe better performance by the Stackelberg GAN on the Fashion-MNIST dataset. Fashion-MNIST is a dataset which consists of 60,000 examples. Each example is a $28\times 28$ grayscale image associating with a label from 10 classes. We follow the standard preprocessing step that each pixel is normalized via subtracting it by 0.5 and dividing it by $0.5$. We specify the detailed network setups of discriminator and generators in Table \ref{table: Architecture and hyperparameters for the MNIST dataset}.

Figure \ref{figure: experiments on fashion-MNIST dataset} shows the diversity of generated fashions by Stackelberg GAN with varying number of generators. When there is only one generator, the generated images are not very diverse without any ``bags'' being found. However, as the number of generators increases, the generated images tend to be more diverse. In particular, for $10$-generator Stackelberg GAN, each generator is associated with one class without any class being missed.

\begin{figure}[t]
\centering
  \begin{subfigure}{0.3\textwidth}
    \includegraphics[width=\textwidth]{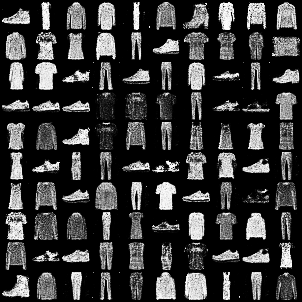}
  \end{subfigure}
  \begin{subfigure}{0.3\textwidth}
    \includegraphics[width=\textwidth]{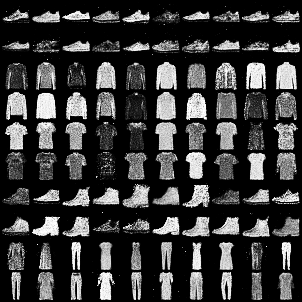}
  \end{subfigure}
  \begin{subfigure}{0.3\textwidth}
    \includegraphics[width=\textwidth]{fashion_mnist_10.png}
  \end{subfigure}
  \caption{Generated samples by Stackelberg GAN on CIFAR-10 dataset without cherry pick. \textbf{Left Figure:} Examples generated by the standard GAN. It shows that the standard GAN fails to generate bags. \textbf{Middle Figure:} Examples generated by the Stackelberg GAN with 5 generators, where every two rows correspond to one generator. \textbf{Right Figure:} Examples generated by the Stackelberg GAN with 10 generators, where each row corresponds to one generator.}
  \label{figure: experiments on fashion-MNIST dataset}
\vspace{-0.5cm}
\end{figure}

\vspace{-0.2cm}
\subsection{CIFAR-10 Dataset}
\vspace{-0.1cm}
We then implement Stackelberg GAN on the CIFAR-10 dataset. CIFAR-10 includes 60,000 32$\times$32
training images, which fall into 10 classes~\cite{krizhevsky2009learning}). The architecture of generators and discriminator follows the design of DCGAN in~\cite{radford2015unsupervised}. We train models with 5, 10, and 20 fixed-size generators. The results show that the model with 10 generators performs the best. We also train 10-generator models where each generator has 2, 3 and 4 convolution layers. We find that the generator with 2 convolution layers, which is the most shallow one, performs the best. So we report the results obtained from the model with 10 generators containing 2 convolution layers. Figure ~\ref{fig:samples_cifar10} shows the samples produced by different generators. The samples are randomly drawn instead of being cherry-picked to demonstrate the quality of images generated by our model.

For quantitative evaluation, we use Inception score and Fr\'echet Inception Distance (FID) to measure the difference between images generated by models and real images. 

\noindent{\textbf{Results of Inception Score.}}
\comment{
The Inception score is defined as the expected Kullback–Leibler divergence between conditional distribution $p(\text{y}|\bold{x})$ and marginal distribution $p(\text{y})$, which is $\exp(\mathop{\mathbb{E}}_\bold{x}[D_{KL}(p(\text{y}|\bold{x})\vert \vert p(\text{y}))])$ (\cite{salimans2016improved}). The conditional label distribution for image $\x$ is estimated using a pretrained Inception model described in~\cite{szegedy2016rethinking}.}
The Inception score measures the quality of a generated image and is correlated well with human's judgment~\cite{salimans2016improved}. We report the Inception score obtained by our Stackelberg GAN and other baseline methods in Table~\ref{table:results for cifar10}. For fair comparison, we only consider the baseline models which are completely unsupervised model and do not need any label information. Instead of directly using the reported Inception scores by original papers, we replicate the experiment of \emph{MGAN} using the code, architectures and parameters reported by their original papers, and evaluate the scores based on the new experimental results. Table~\ref{table:results for cifar10} shows that our model achieves a score of 7.62 in CIFAR-10 dataset, which outperforms the state-of-the-art models. For fairness, we configure our Stackelberg GAN with the same capacity as MGAN, that is, the two models have comparative number of total parameters. When the capacity of our Stackelberg GAN is as small as DCGAN, our model improves over DCGAN significantly.

\medskip
\vspace{-0.2cm}
\noindent{\textbf{Results of Fr\'echet Inception Distance.}}
We then evaluate the performance of models on CIFAR-10 dataset using the Fr\'echet Inception Distance (FID), which better captures the similarity between generated images and real ones
~\cite{heusel2017gans}. As Table~\ref{table:results for cifar10} shows, under the same capacity as DCGAN, our model reduces the FID by $20.74\%$. Meanwhile, under the same capacity as MGAN, our model reduces the FID by $14.61\%$. This improvement further indicates that our Stackelberg GAN with multiple light-weight generators help improve the quality of the generated images.

\begin{table}[t]
\vspace{-0.2cm}
\caption{Quantitative evaluation of various GANs on CIFAR-10 dataset. All results are either reported by the authors themselves or run by us with codes provided by the authors. Every model is trained \emph{without label}. Methods with higher inception score and lower Fr\'echet Inception Distance are better.}
\vspace{-0.5cm}
\begin{center}
\begin{tabular}{lcc}
\toprule
Model & Inception Score & Fr\'echet Inception Distance\\
\midrule
Real data & $11.24\pm0.16$ & -\\
\rowcolor{mygray}
WGAN~\cite{arjovsky2017wasserstein} & $3.82\pm0.06$ & -\\
MIX+WGAN~\cite{arora2017generalization} & $4.04\pm0.07$ &-\\
\rowcolor{mygray}
Improved-GAN~\cite{salimans2016improved} & $4.36\pm 0.04$ & -\\
ALI~\cite{dumoulin2016adversarially} & $5.34\pm0.05$ & -\\
\rowcolor{mygray}
BEGAN~\cite{berthelot2017began} & $5.62$ & -\\
MAGAN~\cite{wang2017magan} & $5.67$ & -\\
\rowcolor{mygray}
GMAN~\cite{durugkar2016generative} & $6.00\pm0.19$ & -\\
DCGAN~\cite{radford2015unsupervised} & $6.40\pm0.05$ & 37.7\\
\rowcolor{mygray}
\textbf{Ours (capacity as DCGAN)} & $\mathbf{7.02\pm0.07}$ & \textbf{29.88}\\
D2GAN~\cite{nguyen2017dual} & $7.15\pm0.07$ & -\\
\rowcolor{mygray}
MAD-GAN (our run, capacity $1\times$MGAN)~\cite{ghosh2017multi} & $6.67 \pm 0.07$ & 34.10\\
MGAN (our run)~\cite{hoang2018mgan} & $7.52\pm0.1$ & 31.34\\
\rowcolor{mygray}
\textbf{Ours (capacity $1\times$MGAN$\approx 1.8\times$DCGAN)} & $\mathbf{7.62\pm0.07}$ & \textbf{26.76}\\
\bottomrule
\end{tabular}
\end{center}
\label{table:results for cifar10}
\vspace{-0.6cm}
\end{table}

\vspace{-0.1cm}
\subsection{Tiny ImageNet Dataset}
\vspace{-0.1cm}
We also evaluate the performance of Stackelberg GAN on the Tiny ImageNet dataset. The Tiny ImageNet is a large image dataset, where each image is labelled to indicate the class of the object inside the image. We resize the figures down to $32\times 32$ following the procedure described in~\cite{chrabaszcz2017downsampled}. Figure~\ref{fig:samples_imagenet} shows the randomly picked samples generated by $10$-generator Stackelberg GAN. Each row has samples generated from one generator. Since the types of some images in the Tiny ImageNet are also included in the CIFAR-10, we order the rows in the similar way as Figure~\ref{fig:samples_cifar10}.

\begin{figure}[th]
\centering
  \begin{subfigure}{0.36\textwidth}
    \includegraphics[width=\textwidth]{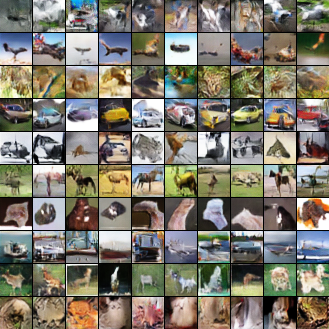}
    \caption{Samples on CIFAR-10.}
    \label{fig:samples_cifar10}
  \end{subfigure}
  \hspace{+0.5cm}
  \begin{subfigure}{0.36\textwidth}
    \includegraphics[width=\textwidth]{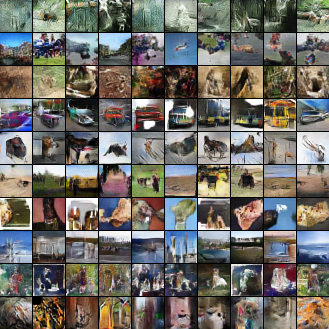}
    \caption{Samples on Tiny ImageNet.}
    \label{fig:samples_imagenet}
  \end{subfigure}
  \vspace{-0.2cm}
  \caption{Examples generated by Stackelberg GAN on CIFAR-10 (left) and Tiny ImageNet (right) without cherry pick, where each row corresponds to samples from one generator.}
\vspace{-0.6cm}
\end{figure}

\vspace{-0.3cm}
\section{Conclusions}
\vspace{-0.2cm}
In this work, we tackle the problem of instability during GAN training procedure, which is caused by the huge gap between minimax and maximin objective values. The core of our techniques is a multi-generator architecture. We show that the minimax gap shrinks to $\epsilon$ as the number of generators increases with rate $\widetilde\cO(1/\epsilon)$, when the maximization problem w.r.t. the discriminator is concave. This improves over the best-known results of $\widetilde\cO(1/\epsilon^2)$. Experiments verify the effectiveness of our proposed methods.


\medskip
\noindent\textbf{Acknowledgements.} Part of this work was done while H.Z. and S.X. were summer interns at Petuum Inc. We thank Maria-Florina Balcan, Yingyu Liang, and David P. Woodruff for their useful discussions.

\bibliography{reference}
\bibliographystyle{plain}

\newpage
\appendix

\section{Supplementary Experiments}

\begin{figure}[ht]
\centering
  \begin{subfigure}{0.195\textwidth}
    \includegraphics[width=\textwidth]{mG_Ours_0.png}
    \caption{Step 0.}
  \end{subfigure}
  \begin{subfigure}{0.195\textwidth}
    \includegraphics[width=\textwidth]{mG_Ours_40.png}
    \caption{Step 6k.}
  \end{subfigure}
  \begin{subfigure}{0.195\textwidth}
    \includegraphics[width=\textwidth]{mG_Ours_80.png}
    \caption{Step 13k.}
  \end{subfigure}
  \begin{subfigure}{0.195\textwidth}
    \includegraphics[width=\textwidth]{mG_Ours_120.png}
    \caption{Step 19k.}
  \end{subfigure}
  \begin{subfigure}{0.195\textwidth}
    \includegraphics[width=\textwidth]{mG_Ours_199.png}
    \caption{Step 25k.}
  \end{subfigure}

  \begin{subfigure}{0.195\textwidth}
    \includegraphics[width=\textwidth]{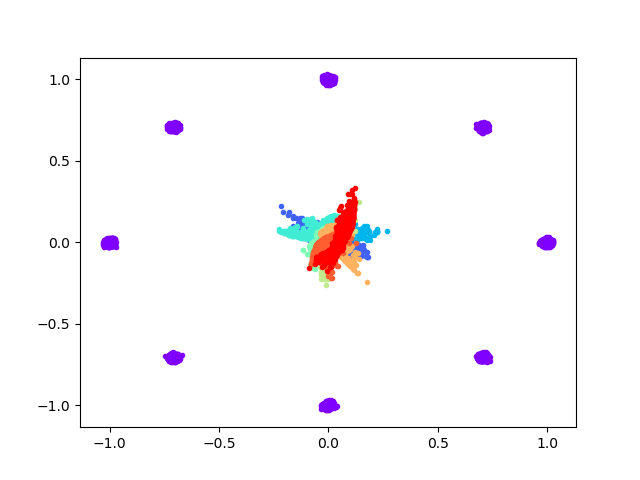}
    \caption{Step 0.}
  \end{subfigure}
  \begin{subfigure}{0.195\textwidth}
    \includegraphics[width=\textwidth]{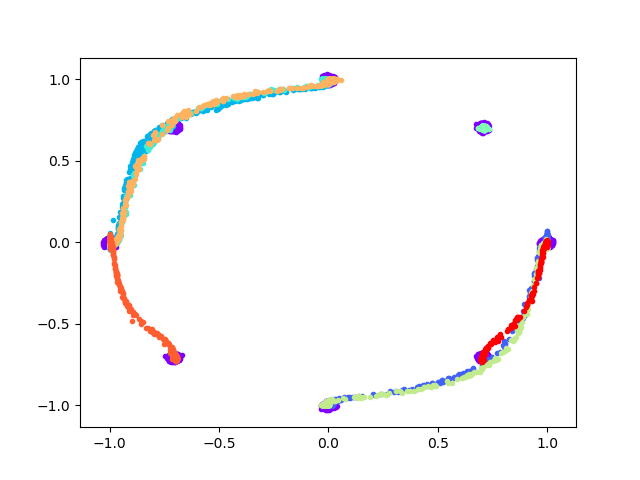}
    \caption{Step 6k.}
  \end{subfigure}
  \begin{subfigure}{0.195\textwidth}
    \includegraphics[width=\textwidth]{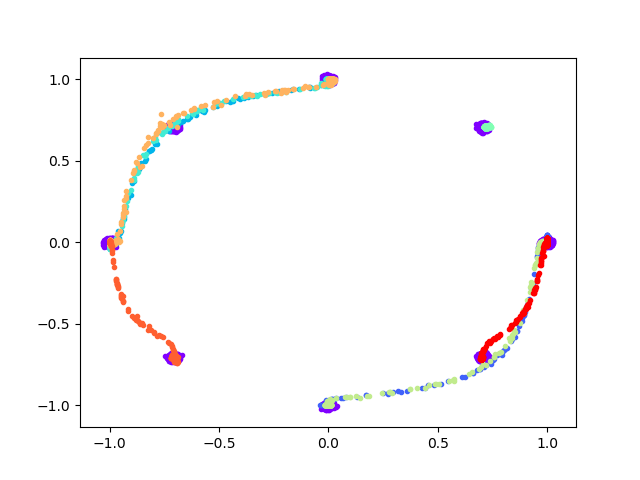}
    \caption{Step 13k.}
  \end{subfigure}
  \begin{subfigure}{0.195\textwidth}
    \includegraphics[width=\textwidth]{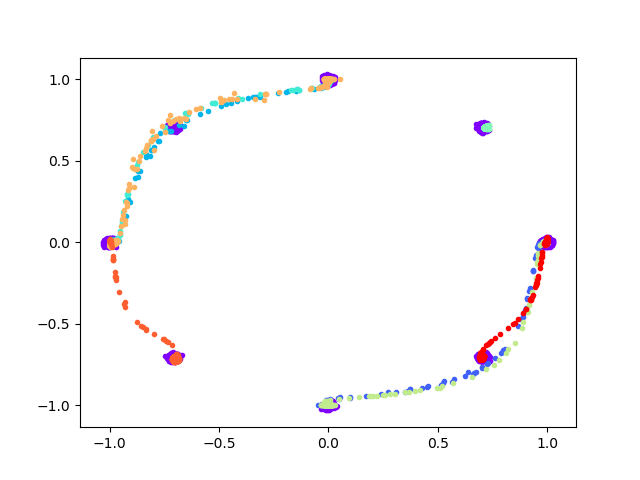}
    \caption{Step 19k.}
  \end{subfigure}
  \begin{subfigure}{0.195\textwidth}
    \includegraphics[width=\textwidth]{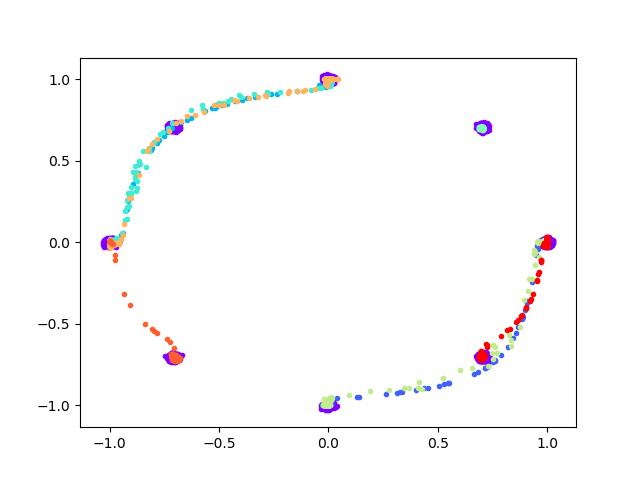}
    \caption{Step 25k.}
  \end{subfigure}

  \begin{subfigure}{0.195\textwidth}
    \includegraphics[width=\textwidth]{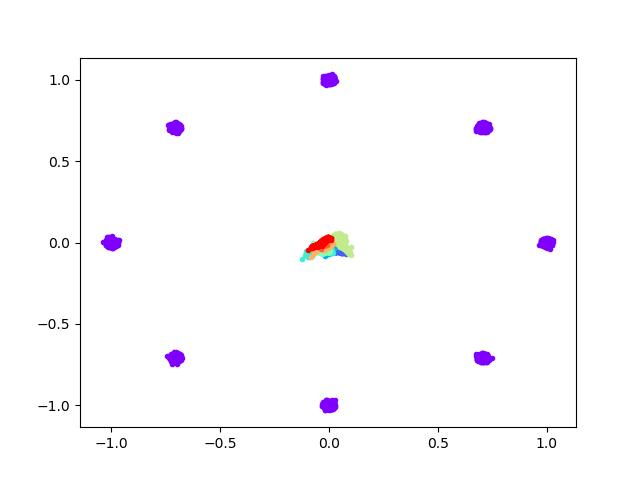}
    \caption{Step 0.}
  \end{subfigure}
  \begin{subfigure}{0.195\textwidth}
    \includegraphics[width=\textwidth]{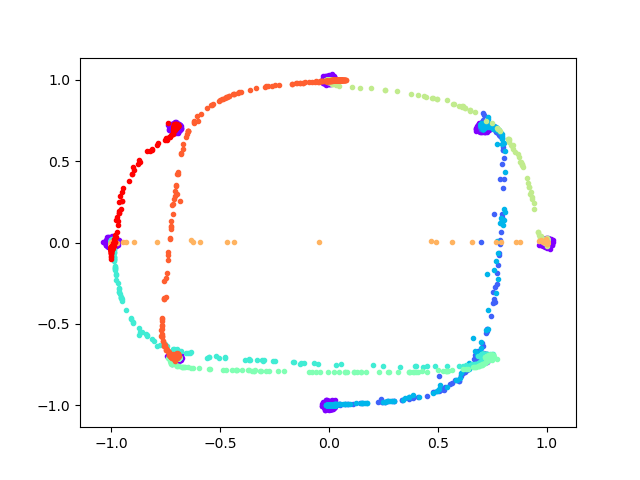}
    \caption{Step 6k.}
  \end{subfigure}
  \begin{subfigure}{0.195\textwidth}
    \includegraphics[width=\textwidth]{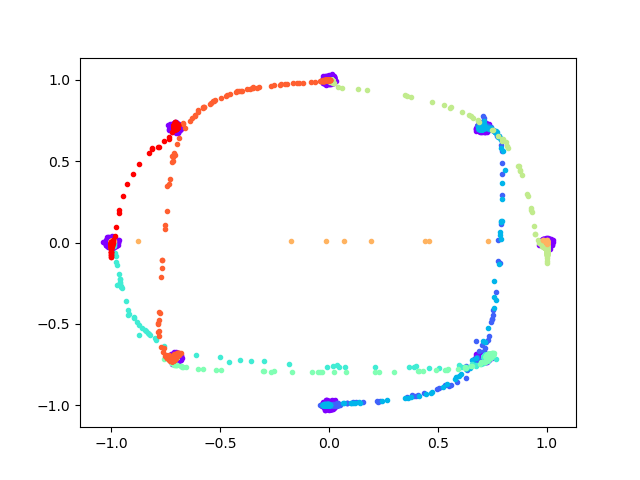}
    \caption{Step 13k.}
  \end{subfigure}
  \begin{subfigure}{0.195\textwidth}
    \includegraphics[width=\textwidth]{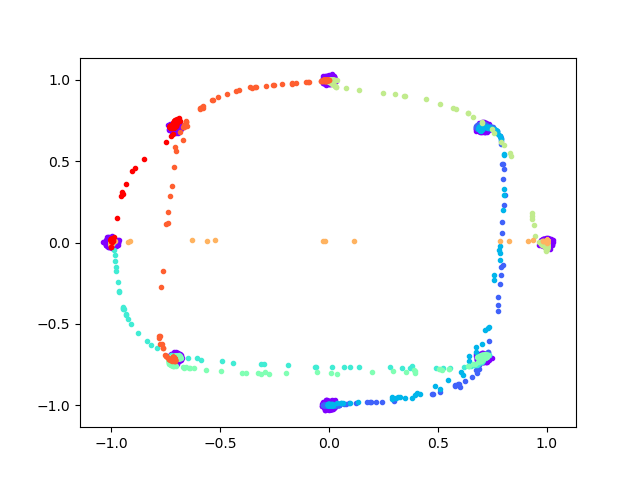}
    \caption{Step 19k.}
  \end{subfigure}
  \begin{subfigure}{0.195\textwidth}
    \includegraphics[width=\textwidth]{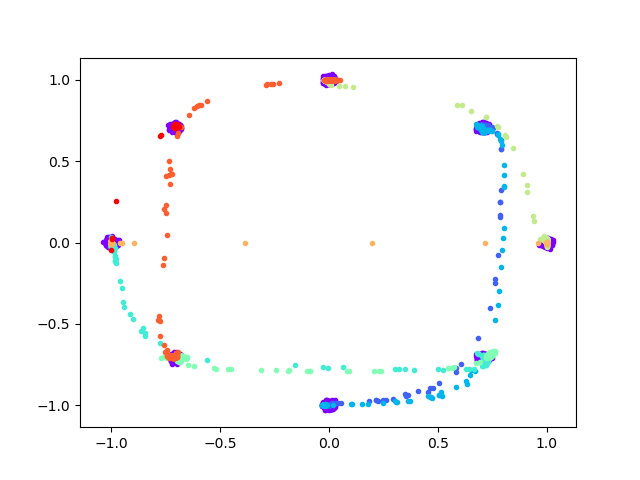}
    \caption{Step 25k.}
  \end{subfigure}
  \caption{Effects of generator architecture of Stackelberg GAN on a toy 2D mixture of Gaussians, where the number of generators is set to be 8. \textbf{Top Row:} The generators have one hidden layer. \textbf{Middle Row:} The generators have two hidden layers. \textbf{Bottom Row:} The generators have three hidden layer. It shows that with the number of hidden layers increasing, each generator tends to learn more modes. However, mode collapse never happens for all three architectures.}
  \label{fig:effects_on_gaussian}
\end{figure}
Figure~\ref{fig:effects_on_gaussian} shows how the architecture of generators affects the distributions of samples by each generators. The enlarged versions of samples generated by Stackelberg GAN with architectures shown in Table~\ref{table: Architecture and hyperparameters for the CIFAR-10 dataset} and Table~\ref{table: Architecture and hyperparameters for the Imagenet dataset} are deferred to Figures~\ref{fig:cifar_example1},~\ref{fig:cifar_example2},~\ref{fig:imagenet_example1} and~\ref{fig:imagenet_example2}.

\section{Proofs of Main Results}

\subsection{Proofs of Theorem \ref{theorem: duality gap} and Corollary \ref{corollary: duality gap}: Minimax Duality Gap}

\medskip
\noindent\textbf{Theorem~\ref{theorem: duality gap} (restated).}
\emph{Let $\Delta_\gamma^i:=\sup_{\u\in\R^t}\{h_i(\u)-\breve{\cl}h_i(\u)\}\ge 0$ and $\Delta_\gamma^{\worst}:=\max_{i\in[I]}\Delta_\gamma^i$. Denote by $t$ the number of parameters of discriminator, i.e., $\theta\in\R^t$. Suppose that $h_i(\cdot)$ is continuous and $\mathsf{dom} h_i$ is compact and convex. Then the duality gap can be bounded by
\begin{equation*}
0\le w^*-q^*\le \Delta_\theta^{\minimax}+\Delta_\theta^{\maximin}+\epsilon,
\end{equation*}
provided that the number of generators $I> \frac{t+1}{\epsilon}\Delta_{\gamma}^{\worst}$.}

\begin{proof}
The statement $0\le w^*-q^*$ is by the weak duality. Thus it suffices to prove the other side of the inequality. All notations in this section are defined in Section \ref{section: minimax duality gap}.

We first show that
\begin{equation*}
\inf_{\gamma_1,...,\gamma_I\in\R^g}\sup_{\theta\in\R^t} \frac{1}{I}\widetilde\Phi(\gamma_1,...,\gamma_I;\theta)-\sup_{\theta\in\R^t}\inf_{\gamma_1,...,\gamma_I\in\R^g} \frac{1}{I}\widetilde\Phi(\gamma_1,...,\gamma_I;\theta)\le\epsilon.
\end{equation*}
Denote by
\begin{equation*}
p(\u):=\inf_{\gamma_1,...,\gamma_I\in\R^g}
 \sup_{\theta\in\R^t}\left\{\widetilde\Phi(\gamma_1,...,\gamma_I;\theta)-\u^T\theta\right\}.
\end{equation*}
We have the following lemma.

\begin{lemma}
\label{lemma: relation between p and hat p}
We have
\begin{equation*}
\sup_{\theta\in\R^t}\inf_{\gamma_1,...,\gamma_I\in\R^g} \widetilde\Phi(\gamma_1,...,\gamma_I;\theta)=(\breve{\cl}p)(\0)\le p(\0)=\inf_{\gamma_1,...,\gamma_I\in\R^g}\sup_{\theta\in\R^t} \widetilde\Phi(\gamma_1,...,\gamma_I;\theta).
\end{equation*}
\begin{proof}
By the definition of $p(\0)$, we have $p(\0)=\inf_{\gamma_1,...,\gamma_I\in\R^g}
\sup_{\theta\in\R^t}\widetilde\Phi(\gamma_1,...,\gamma_I;\theta)$. Since $(\breve{\cl}p)(\cdot)$ is the convex closure of function $p(\cdot)$ (a.k.a. weak duality theorem), we have $(\breve{\cl}p)(\0)\le p(\0)$. We now show that  $\sup_{\theta\in\R^t}\inf_{\gamma_1,...,\gamma_I\in\R^g} \widetilde\Phi(\gamma_1,...,\gamma_I;\theta)=(\breve{\cl}p)(\0).$
Note that $p(\u)=\inf_{\gamma_1,...,\gamma_I\in\R^g} p_{\gamma_1,...,\gamma_I}(\u)$, where $$p_{\gamma_1,...,\gamma_I}(\u)=\sup_{\theta\in\R^t}\{\widetilde{\Phi}(\gamma_1,...,\gamma_I;\theta)-\u^T\theta\}=(-\widetilde\Phi(\gamma_1,...,\gamma_I;\cdot))^*(-\u),$$ and that
\begin{equation}
\label{equ: middle step a}
\begin{split}
&\inf_{\u\in\R^t} \{p_{\gamma_1,...,\gamma_I}(\u)+\u^T\mu\}\\&=-\sup_{\u\in\R^t}\{\u^T(-\mu)-p_{\gamma_1,...,\gamma_I}(\u)\}\\
&=-(p_{\gamma_1,...,\gamma_I})^*(-\mu)\quad\text{(by the definition of conjugate function)}\\
&=-(-\widetilde\Phi(\gamma_1,...,\gamma_I;\cdot))^{**}(\mu)=\widetilde\Phi(\gamma_1,...,\gamma_I;\mu).\quad\text{(by conjugate theorem)}
\end{split}
\end{equation}
So we have
\begin{equation*}
\begin{split}
&(\breve\cl p)(\0)\\&=\sup_{\mu\in\R^t}\inf_{\u\in\R^t} \{p(\u)+\u^T\mu\}\quad\text{(by Lemma \ref{lemma: strong duality})}\\
&=\sup_{\mu\in\R^t}\inf_{\u\in\R^t}\inf_{\gamma_1,...,\gamma_I\in\R^g}\{p_{\gamma_1,...,\gamma_I}(\u)+\u^T\mu\}\quad\text{(by the definition of $p(\u)$)}\\
&=\sup_{\mu\in\R^t}\inf_{\gamma_1,...,\gamma_I\in\R^g}\inf_{\u\in\R^t}\{p_{\gamma_1,...,\gamma_I}(\u)+\u^T\mu\}=\sup_{\mu\in\R^t}\inf_{\gamma_1,...,\gamma_I\in\R^g}\widetilde\Phi(\gamma_1,...,\gamma_I;\mu),\quad\text{(by Eqn. \eqref{equ: middle step a})}
\end{split}
\end{equation*}
\end{proof}
as desired.
\end{lemma}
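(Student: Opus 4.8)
The plan is to read the displayed chain as three separate claims and dispose of the two easy ones before attacking the substantive identity. For the rightmost equality I would simply evaluate the definition of $p$ at the origin: since $p(\u)=\inf_{\gamma_1,\dots,\gamma_I}\sup_\theta\{\widetilde\Phi(\gamma_1,\dots,\gamma_I;\theta)-\u^T\theta\}$, setting $\u=\0$ annihilates the linear term and yields $p(\0)=\inf_{\gamma_1,\dots,\gamma_I}\sup_\theta\widetilde\Phi$. The middle inequality $(\breve{\cl}p)(\0)\le p(\0)$ is nothing but the pointwise bound $\breve{\cl}p\le p$, which holds for every function since the convex closure has epigraph equal to the closed convex hull of the epigraph of $p$ and therefore lies below $p$; this is the weak-duality direction. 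All the genuine work is thus concentrated in the leftmost identity $\sup_\theta\inf_{\gamma_1,\dots,\gamma_I}\widetilde\Phi=(\breve{\cl}p)(\0)$.

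For the leftmost identity I would invoke the standard convex-analytic representation of the convex closure at the origin, namely $(\breve{\cl}p)(\0)=p^{**}(\0)=\sup_\mu\inf_\u\{p(\u)+\u^T\mu\}$. This is the content I would take from a strong-duality lemma; it requires $p$ to be proper and to satisfy $\breve{\cl}p=p^{**}$, which is exactly where the hypotheses that each $h_i$ is continuous and $\mathsf{dom}\,h_i$ is compact and convex enter, ruling out improper degeneracies. Granting this representation, the remainder is purely computational.

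I would then substitute the pointwise decomposition $p(\u)=\inf_{\gamma_1,\dots,\gamma_I}p_{\gamma_1,\dots,\gamma_I}(\u)$, where $p_{\gamma_1,\dots,\gamma_I}(\u)=\sup_\theta\{\widetilde\Phi(\gamma_1,\dots,\gamma_I;\theta)-\u^T\theta\}=(-\widetilde\Phi(\gamma_1,\dots,\gamma_I;\cdot))^*(-\u)$, and swap the order of the two infima (over $\u$ and over the generator parameters), which is always legitimate. The crux is then to evaluate the inner infimum $\inf_\u\{p_{\gamma_1,\dots,\gamma_I}(\u)+\u^T\mu\}$: rewriting it as $-(p_{\gamma_1,\dots,\gamma_I})^*(-\mu)$ and recognizing $p_{\gamma_1,\dots,\gamma_I}$ as the conjugate of $-\widetilde\Phi(\gamma_1,\dots,\gamma_I;\cdot)$, the Fenchel--Moreau (biconjugate) theorem collapses the double conjugate back to the original, giving $\inf_\u\{p_{\gamma_1,\dots,\gamma_I}(\u)+\u^T\mu\}=\widetilde\Phi(\gamma_1,\dots,\gamma_I;\mu)$. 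Chaining this through the representation produces $(\breve{\cl}p)(\0)=\sup_\mu\inf_{\gamma_1,\dots,\gamma_I}\widetilde\Phi(\gamma_1,\dots,\gamma_I;\mu)$, which after renaming $\mu$ to $\theta$ is precisely $\sup_\theta\inf_{\gamma_1,\dots,\gamma_I}\widetilde\Phi$, completing the chain.

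The step I expect to be delicate — and the reason $\widetilde\Phi$ is assembled from the convex closure $\widehat{\cl}\phi$ rather than from $\phi$ itself — is the application of Fenchel--Moreau: the identity $(-\widetilde\Phi(\gamma_1,\dots,\gamma_I;\cdot))^{**}=-\widetilde\Phi(\gamma_1,\dots,\gamma_I;\cdot)$ is valid only because each $-\widehat{\cl}\phi(\gamma_i;\cdot)$ is by construction closed and convex in $\theta$, so the finite sum $-\widetilde\Phi$ is closed and convex and equals its own biconjugate. Had I worked with $-\phi$, which need not be closed convex in $\theta$, the biconjugate would only return its closed convex hull and the identity would fail. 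The second point needing care is the representation $(\breve{\cl}p)(\0)=\sup_\mu\inf_\u\{p(\u)+\u^T\mu\}$ itself: one must verify that $p$ is proper and regular enough for the convex closure to coincide with the biconjugate at $\0$, which is exactly what the compactness and continuity hypotheses on $h_i$ are there to secure.
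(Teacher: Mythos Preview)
Your proposal is correct and follows essentially the same route as the paper: both arguments evaluate $p(\0)$ by definition, cite $\breve{\cl}p\le p$ for the inequality, and prove the leftmost identity by representing $(\breve{\cl}p)(\0)=\sup_\mu\inf_\u\{p(\u)+\u^T\mu\}$, decomposing $p(\u)=\inf_{\gamma_1,\dots,\gamma_I}p_{\gamma_1,\dots,\gamma_I}(\u)$, swapping the two infima, and collapsing the inner conjugate via Fenchel--Moreau using that $-\widetilde\Phi(\gamma_1,\dots,\gamma_I;\cdot)$ is closed convex in $\theta$. Your explicit remark about why $\widehat{\cl}\phi$ rather than $\phi$ is needed for the biconjugate step is spot on and in fact more explicit than the paper's own presentation.
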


By Lemma \ref{lemma: relation between p and hat p}, it suffices to show $p(\0)-(\breve\cl p)(\0)\le (t+1)\Delta_{\gamma}^{\worst}$. We have the following lemma.
\begin{lemma}
\label{lemma: gap between p and convex p}
Under the assumption in Theorem \ref{theorem: duality gap},
$
p(\0)-(\breve\cl p)(\0)\le (t+1)\Delta_{\gamma}^{\worst}.
$
\end{lemma}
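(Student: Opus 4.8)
The plan is to recognize $p$ as an infimal convolution of the (possibly nonconvex) functions $h_1,\dots,h_I$ and then bound its convexification gap at $\0$ by a Shapley--Folkman argument in $\R^{t+1}$. For fixed $\gamma_1,\dots,\gamma_I$ the inner maximization equals $(-\widetilde\Phi(\gamma_1,\dots,\gamma_I;\cdot))^*(-\u)$, and since $-\widetilde\Phi(\gamma_1,\dots,\gamma_I;\cdot)=\sum_{i=1}^I\big(-\widehat{\cl}\phi(\gamma_i;\cdot)\big)$ is a finite sum of closed convex functions of $\theta$, I would invoke the identity ``conjugate of a sum $=$ infimal convolution of the conjugates'' together with the fact that a function and its convex closure have the same conjugate, $(-\widehat{\cl}\phi(\gamma_i;\cdot))^{*}=(-\phi(\gamma_i;\cdot))^{*}$. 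Taking $\inf_{\gamma_1,\dots,\gamma_I}$ and interchanging the two infima then gives
\[
p(\u)=\inf\Big\{\sum_{i=1}^I h_i(\v_i):\ \sum_{i=1}^I \v_i=-\u\Big\},\qquad\text{in particular}\qquad p(\0)=\inf\Big\{\sum_{i=1}^I h_i(\v_i):\ \sum_{i=1}^I \v_i=\0\Big\}.
\]
Read through epigraphs, this says that at the fibre over $\0$ one has $p(\0)=\inf\{s:(\0,s)\in\sum_{i=1}^I\epi h_i\}$, and applying the closed-convex-hull operation $\breve{\cl}$ while using that convex hull commutes with Minkowski sums, $(\breve{\cl}p)(\0)=\inf\{s:(\0,s)\in\overline{\sum_{i=1}^I\conv\,\epi h_i}\}$.

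Next I would use the compactness hypotheses to make this an \emph{attained} decomposition. Since $\dom h_i$ is compact and $h_i$ continuous, the graph $\{(\u,h_i(\u)):\u\in\dom h_i\}$ is compact, so $\conv\,\epi h_i=\conv\{(\u,h_i(\u)):\u\in\dom h_i\}+\R_{\ge 0}\e_{t+1}$ is closed, and hence so is $\sum_{i=1}^I\conv\,\epi h_i$. Therefore $(\breve{\cl}p)(\0)$ is finite and the point $(\0,(\breve{\cl}p)(\0))$ genuinely lies in $\sum_{i=1}^I\conv\,\epi h_i$. Now I apply the Shapley--Folkman lemma in $\R^{t+1}$: this point can be written as $\sum_{i=1}^I(\w_i,u_i)$ with $(\w_i,u_i)\in\conv\,\epi h_i$ for every $i$, and with $(\w_i,u_i)\in\epi h_i$ for all but at most $t+1$ indices $i$.

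To conclude, note $\conv\,\epi h_i\subseteq\epi(\breve{\cl}h_i)$, so $u_i\ge\breve{\cl}h_i(\w_i)$ for all $i$; for the at most $t+1$ exceptional indices I bound further $\breve{\cl}h_i(\w_i)\ge h_i(\w_i)-\Delta_\gamma^{i}\ge h_i(\w_i)-\Delta_\gamma^{\worst}$, while for the remaining indices $u_i\ge h_i(\w_i)$ holds directly. Summing over $i$ and using $\sum_{i=1}^I\w_i=\0$ together with the formula for $p(\0)$,
\[
(\breve{\cl}p)(\0)=\sum_{i=1}^I u_i\ \ge\ \sum_{i=1}^I h_i(\w_i)-(t+1)\Delta_\gamma^{\worst}\ \ge\ p(\0)-(t+1)\Delta_\gamma^{\worst},
\]
which is exactly $p(\0)-(\breve{\cl}p)(\0)\le(t+1)\Delta_\gamma^{\worst}$.

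The step I expect to be the main obstacle is the first one: turning the min--max into the clean separable infimal-convolution form requires justifying the conjugate-of-a-sum identity with no residual duality gap, the interchange of the infima over $\gamma_i$ and over $\v_i$, and the behaviour of $\breve{\cl}$ under Minkowski sums. This is precisely where the continuity of $h_i$ and compactness of $\dom h_i$ enter, and the convexity of $\dom h_i$ is what guarantees $\dom\breve{\cl}h_i=\dom h_i$, so that the per-index defect $h_i(\w_i)-\breve{\cl}h_i(\w_i)$ remains finite and bounded by $\Delta_\gamma^{i}$ even at the exceptional indices. Once those structural identities are in place, Steps~2--3 are a routine application of Shapley--Folkman.
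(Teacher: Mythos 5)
Your proposal is correct and follows essentially the same route as the paper's proof: rewrite $p$ as the infimal convolution of the $h_i$ via the conjugate-of-a-sum identity, then apply Shapley--Folkman in $\R^{t+1}$ and charge the convexification defect $h_i-\breve{\cl}h_i\le\Delta_\gamma^{\worst}$ only to the at most $t+1$ exceptional indices. The only cosmetic difference is that you phrase the argument with epigraphs (and an explicit closedness step), whereas the paper works with the compact graph sets $\cY_i=\{(\u_i,h_i(\u_i))\}$ and a Carath\'eodory representation, which are equivalent formulations.
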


\begin{proof}
We note that
\begin{equation*}
\begin{split}
&p(\u):=\inf_{\gamma_1,...,\gamma_I\in\R^g}
 \sup_{\theta\in\R^t}\left\{\widetilde\Phi(\gamma_1,...,\gamma_I;\theta)-\u^T\theta\right\}\\
&=\inf_{\gamma_1,...,\gamma_I\in\R^g}
 \sup_{\theta\in\R^t}\left\{\sum_{i=1}^I\widehat{\cl}\phi(\gamma_i;\theta)-\u^T\theta\right\}\quad \text{(by the definition of $\widetilde\Phi$)}\\
&=\inf_{\gamma_1,...,\gamma_I\in\R^g}\left(\sum_{i=1}^I -\widehat{\cl}\phi(\gamma_i;\cdot)\right)^*(-\u)\quad \text{(by the definition of conjugate function)}\\
&=\inf_{\gamma_1,...,\gamma_I\in\R^g} \inf_{\u_1+...+\u_I=-\u}\left\{\sum_{i=1}^I(-\widehat{\cl}\phi(\gamma_i;\cdot))^*(\u_i)\right\}\quad \text{(by Lemma \ref{lemma: conjugate of sum})}\\
&=\inf_{\gamma_1,...,\gamma_I\in\R^g} \inf_{\u_1+...+\u_I=-\u}\left\{\sum_{i=1}^I(-\phi(\gamma_i;\cdot))^*(\u_i)\right\}\quad \text{(by conjugate theorem)}\\
&=\inf_{\u_1+...+\u_I=-\u}\inf_{\gamma_1,...,\gamma_I\in\R^g}\left\{(-\phi(\gamma_1;\cdot))^*(\u_1)+...+(-\phi(\gamma_I;\cdot))^*(\u_I)\right\}\\
&=:\inf_{\u_1+...+\u_I=-\u} \{h_1(\u_1)+...+h_I(\u_I)\},\quad\text{(by the definition of $h_i(\cdot)$)}
\end{split}
\end{equation*}
where $\u_1,...,\u_I,\u\in\R^{t}$. Therefore,
\begin{equation*}
p(\0)=\inf_{\u_1,...,\u_I\in\R^t} \sum_{i=1}^I h_i(\u_i),\quad\mbox{s.t.}\quad \sum_{i=1}^I \u_i=\0.
\end{equation*}

Consider the subset of $\mathbb{R}^{t+1}$:
\begin{equation*}
\mathcal{Y}_i:=\left\{\y_i\in\R^{t+1}:\y_i=\left[\u_i,h_i(\u_i)\right],\u_i\in\dom h_i\right\},\quad i\in[I].
\end{equation*}
Define the vector summation
\begin{equation*}
\cY:=\cY_1+\cY_2+...+\cY_I.
\end{equation*}
Since $h_i(\cdot)$ is continuous and $\dom h_i$ is compact, the set
\begin{equation*}
\{(\u_i,h_i(\u_i)):\u_i\in\dom h_i\}
\end{equation*}
is compact. So $\cY$, $\mathsf{conv}(\cY)$, $\cY_i$, and $\mathsf{conv}(\cY_i)$, $i\in[I]$ are all compact sets.
According to the definition of $\cY$ and the standard duality argument~\cite{bertsekas2009min}, we have
\begin{equation*}
p(\0)=\inf\left\{w:\mbox{there exists }(\r,w)\in \cY\mbox{ such that }\r=\0\right\},
\end{equation*}
and
\begin{equation*}
\breve\cl p(\0)=\inf\left\{w:\mbox{there exists }(\r,w)\in\mathsf{conv}\left(\cY\right)\mbox{ such that }\r=\0\right\}.
\end{equation*}

We are going to apply the following Shapley-Folkman lemma.
\begin{lemma}[Shapley-Folkman, \cite{starr1969quasi}]
\label{lemma: Shapley-Folkman}
Let $\cY_i,i\in[I]$ be a collection of subsets of $\mathbb{R}^m$. Then for every $\y\in\mathsf{conv}(\sum_{i=1}^I \cY_i)$, there is a subset $\cI(\y)\subseteq[I]$ of size at most $m$ such that
\begin{equation*}
\y\in\left[\sum_{i\not\in \cI(\y)} \cY_i+\sum_{i\in \cI(\y)}\mathsf{conv}(\cY_i)\right].
\end{equation*}
\end{lemma}

We apply Lemma \ref{lemma: Shapley-Folkman} to prove Lemma \ref{lemma: gap between p and convex p} with $m=t+1$. Let $(\overline{\r},\overline{w})\in\mathsf{conv}(\cY)$ be such that
\begin{equation*}
\quad \overline{\r}=\0,\quad \mbox{and}\quad \overline{w}=\breve{\cl}p(\0).
\end{equation*}
Applying the above Shapley-Folkman lemma to the set $\cY=\sum_{i=1}^I\cY_i$, we have that there are a subset $\overline{\cI}\subseteq[I]$ of size $t+1$ and vectors
\begin{equation*}
(\overline{\r}_i,\overline{w}_i)\in\mathsf{conv}(\cY_i),\ \ i\in\overline{\cI}\qquad\mbox{and}\qquad \overline{\u}_i\in\dom h_i,\ \ i\not\in\overline{\cI},
\end{equation*}
such that
\begin{equation}
\label{equ: step 1}
\sum_{i\not\in\overline{\cI}} \overline{\u}_i+\sum_{i\in\overline{\cI}} \overline{\r}_i=\overline{\r}=\0,
\end{equation}
\begin{equation}
\label{equ: step 2}
\sum_{i\not\in\overline{\cI}} h_i(\overline{\u}_i)+\sum_{i\in\overline{\cI}}\overline{w}_i=\breve\cl p(\0).
\end{equation}
Representing elements of the convex hull of $\cY_i\subseteq\R^{t+1}$ by Carath\'{e}odory theorem, we have that for each $i\in\overline{\cI}$, there are vectors $\{\u_i^j\}_{j=1}^{t+2}$ and scalars $\{a_i^j\}_{j=1}^{t+2}\in\mathbb{R}$ such that
\begin{equation*}
\sum_{j=1}^{t+2} a_i^j=1,\quad a_i^j\ge 0,\ j\in[t+2],
\end{equation*}
\begin{equation}
\label{equ: step 3}
\overline{\r}_i=\sum_{j=1}^{t+2} a_i^j \u_i^j=:\overline\u_i\in\dom h_i,\qquad \overline{w}_i=\sum_{j=1}^{t+2} a_i^j h_i(\u_i^j).
\end{equation}
Recall that we define
$$\breve{\cl}h_i(\widetilde\u):=\inf_{\{a^j\},\{\u_i^j\}}\left\{\sum_{j=1}^{t+2} a^j h_i(\u_i^j): \widetilde\u=\sum_{j=1}^{t+2} a^j\u_i^j,\sum_{j=1}^{t+2}a^j=1,a^j\ge 0\right\},$$
and $\Delta_\gamma^i:=\sup_{\u\in\R^t}\{h_i(\u)-\breve{\cl}h_i(\u)\}\ge 0$. We have for $i\in\overline \cI$,
\begin{equation}
\label{equ: step 4}
\begin{split}
\overline{w}_i&\ge \breve{\cl} h_i\left(\sum_{j=1}^{t+2} a_i^j\u_i^j\right)\quad\text{(by the definition of $\breve\cl h_i(\cdot)$)}\\
&\ge h_i\left(\sum_{j=1}^{t+2} a_i^j\u_i^j\right)-\Delta_\gamma^i\quad\text{(by the definition of $\Delta_\gamma^i$)}\\
&=h_i\left(\overline{\u}_i\right)-\Delta_\gamma^i.\quad\text{(by Eqn. \eqref{equ: step 3})}
\end{split}
\end{equation}
Thus, by Eqns. \eqref{equ: step 1} and \eqref{equ: step 3}, we have
\begin{equation}
\label{equ: step 5}
\sum_{i=1}^I \overline{\u}_i=\0,\quad \overline{\u}_i\in\dom h_i,\ i\in[I].
\end{equation}
Therefore, we have
\begin{equation*}
\begin{split}
p(\0)&=\sum_{i=1}^I h_i(\overline{\u}_i)\quad \text{(by Eqn. \eqref{equ: step 5})}\\
&\le \breve\cl p(\0)+\sum_{i\in\overline \cI}\Delta_\gamma^i\quad\text{(by Eqns. \eqref{equ: step 2} and \eqref{equ: step 4})}\\
&\le \breve\cl p(\0)+|\overline\cI|\Delta_\gamma^\worst\\
&=\breve\cl p(\0)+(t+1)\Delta_\gamma^\worst,\quad\text{(by Lemma \ref{lemma: Shapley-Folkman})}
\end{split}
\end{equation*}
as desired.
\end{proof}
By Lemmas \ref{lemma: relation between p and hat p} and \ref{lemma: gap between p and convex p}, we have proved that
\begin{equation*}
\inf_{\gamma_1,...,\gamma_I\in\R^g}\sup_{\theta\in\R^t} \frac{1}{I}\widetilde\Phi(\gamma_1,...,\gamma_I;\theta)-\sup_{\theta\in\R^t}\inf_{\gamma_1,...,\gamma_I\in\R^g} \frac{1}{I}\widetilde\Phi(\gamma_1,...,\gamma_I;\theta)\le\epsilon.
\end{equation*}
To prove Theorem \ref{theorem: duality gap}, we note that
\begin{equation*}
\begin{split}
w^*-q^*&:=\inf_{\gamma_1,...,\gamma_I\in\R^g} \sup_{\theta\in\R^t} \frac{1}{I}\Phi(\gamma_1,...,\gamma_I;\theta)-\sup_{\theta\in\R^t}\inf_{\gamma_1,...,\gamma_I\in\R^g}  \frac{1}{I}\Phi(\gamma_1,...,\gamma_I;\theta)\\
&=\inf_{\gamma_1,...,\gamma_I\in\R^g} \sup_{\theta\in\R^t} \frac{1}{I}\Phi(\gamma_1,...,\gamma_I;\theta)-\inf_{\gamma_1,...,\gamma_I\in\R^g} \sup_{\theta\in\R^t} \frac{1}{I}\widetilde\Phi(\gamma_1,...,\gamma_I;\theta)\\
&\quad+\inf_{\gamma_1,...,\gamma_I\in\R^g} \sup_{\theta\in\R^t} \frac{1}{I}\widetilde\Phi(\gamma_1,...,\gamma_I;\theta)-\sup_{\theta\in\R^t}\inf_{\gamma_1,...,\gamma_I\in\R^g} \frac{1}{I}\widetilde\Phi(\gamma_1,...,\gamma_I;\theta)\\
&\quad+\sup_{\theta\in\R^t}\inf_{\gamma_1,...,\gamma_I\in\R^g}  \frac{1}{I}\widetilde\Phi(\gamma_1,...,\gamma_I;\theta)-\sup_{\theta\in\R^t}\inf_{\gamma_1,...,\gamma_I\in\R^g} \frac{1}{I}\Phi(\gamma_1,...,\gamma_I;\theta)\\
&\le \Delta_\theta^{\minimax}+\Delta_\theta^{\maximin}+\epsilon,
\end{split}
\end{equation*}
as desired.
\end{proof}

\medskip
\noindent\textbf{Corollary~\ref{corollary: duality gap} (restated).}
\emph{Under the settings of Theorem \ref{theorem: duality gap}, when $\phi(\gamma_i;\theta)$ is concave and closed w.r.t. discriminator parameter $\theta$ and the number of generators $I> \frac{t+1}{\epsilon}\Delta_{\gamma}^{\worst}$, we have
$0\le w^*-q^*\le \epsilon$.}

\begin{proof}
When $\phi(\gamma_i;\theta)$ is concave and closed w.r.t. discriminator parameter $\theta$, we have $\widehat{\cl}\phi=\phi$. Thus, $\Delta_\theta^{\minimax}=\Delta_\theta^{\maximin}=0$ and $0\le w^*-q^*\le \epsilon$.
\end{proof}

\subsection{Proofs of Theorem \ref{theorem: approximate equilibrium}: Existence of Approximate Equilibrium}

\medskip
\noindent\textbf{Theorem~\ref{theorem: approximate equilibrium} (restated).}
\emph{Under the settings of Theorem \ref{theorem: duality gap}, suppose that for any $\xi>0$, there exists a generator $G$ such that $\bbE_{\x\sim\cP_d,\z\sim \cP_\z}\|G(\z)-\x\|_2\le\xi$. Let the discriminator and generators be $L$-Lipschitz w.r.t. inputs and parameters, and let $f$ be $L_f$-Lipschitz. Then for any $\epsilon>0$, there exist $I=\frac{t+1}{\epsilon}\Delta_\gamma^{\worst}$ generators $G_{\gamma_1^*},...,G_{\gamma_I^*}$ and a discriminator $D_{\theta^*}$ such that for some value $V\in\R$,
\begin{equation*}
\begin{split}
&\forall \gamma_1,...,\gamma_I\in\R^g,\quad \Phi(\gamma_1,...,\gamma_I;\theta^*)\le V+\epsilon,\\
&\forall \theta\in\R^t,\qquad\qquad\ \Phi(\gamma_1^*,...,\gamma_I^*;\theta)\ge V-\epsilon.\\
\end{split}
\end{equation*}}

\begin{proof}
We first show that the equilibrium value $V$ is $2f(1/2)$. For the discriminator $D_\theta$ which only outputs $1/2$, it has payoff $2f(1/2)$ for all possible implementations of generators $G_{\gamma_1},...,G_{\gamma_I}$. Therefore, we have $V\ge 2f(1/2)$. We now show that $V\le 2f(1/2)$. We note that by assumption, for any $\xi>0$, there exists a closed neighbour of implementation of generator $G_{\xi}$ such that $\bbE_{\x\sim\cP_d,\z\sim\cP_z} \|G_{\xi}'(\z)-\x\|_2\le\xi$ for all $G_{\xi}'$ in the neighbour. Such a neighbour exists because the generator is Lipschitz w.r.t. its parameters. Let the parameter implementation of such neighbour of $G_{\xi}$ be $\Gamma$. The Wasserstein distance between $G_\xi$ and $\cP_d$ is $\xi$. Since the function $f$ and the discriminator are $L_f$-Lipschitz and $L$-Lipschitz w.r.t. their inputs, respectively, we have
\begin{equation*}
\left| \bbE_{\z\sim G_\xi} f(1-D_\theta(\z))-\bbE_{\x\sim \cP_d} f(1-D_\theta(\x))\right|\le \cO(L_f L\xi).
\end{equation*}
Thus, for any fixed $\gamma$, we have
\begin{equation*}
\begin{split}
&\quad \sup_{\theta\in\R^t} \bbE_{\x\sim\cP_d} f(D_\theta(\x))+\bbE_{\z\sim G_\xi} f(1-D_\theta(\z))\\
&\le \cO(L_f L\xi)+\sup_{\theta\in\R^t} \bbE_{\x\sim\cP_d} f(D_\theta(\x))+\bbE_{\x\sim\cP_d} f(1-D_\theta(\x))\\
&\le \cO(L_f L\xi)+ 2f(1/2)\rightarrow 2f(1/2),\quad (\xi\rightarrow +0)
\end{split}
\end{equation*}
which implies that $\frac{1}{I}\sup_{\theta\in\R^t}\Phi(\gamma_1,...,\gamma_I;\theta)\le 2f(1/2)$ for all $\gamma_1,...,\gamma_I\in\Gamma$.
So we have $V=2f(1/2)$. This means that the discriminator cannot do much better than a random guess.

The above analysis implies that the equilibrium is achieved when $D_{\theta^*}$ only outputs $1/2$. Denote by $\Theta$ the small closed neighbour of this $\theta^*$ such that $\Phi(\gamma_1,...,\gamma_I;\theta)$ is concave w.r.t. $\theta\in\Theta$ for any fixed $\gamma_1,...,\gamma_I\in\Gamma$. We thus focus on the loss in the range of $\Theta\subseteq \R^t$ and $\Gamma\subseteq \R^g$:
\begin{equation*}
\Phi(\gamma_1,...,\gamma_I;\theta):=\sum_{i=1}^I \left[\bbE_{\x\sim \cP_d} f(D_\theta(\x))+\bbE_{\z\sim \cP_z} f(1-D_\theta (G_{\gamma_i}(\z)))\right],\quad \theta\in\Theta,\ \gamma_1,...,\gamma_I\in\Gamma.
\end{equation*}
Since $\Phi(\gamma_1,...,\gamma_I;\theta)$ is concave w.r.t. $\theta\in\Theta$ for all $\gamma_1,...,\gamma_I\in\Gamma$, by Corollary \ref{corollary: duality gap}, we have
\begin{equation*}
\inf_{\gamma_1,...,\gamma_I\in\Gamma} \sup_{\theta\in\Theta} \frac{1}{I}\Phi(\gamma_1,...,\gamma_I;\theta)- \sup_{\theta\in\Theta} \inf_{\gamma_1,...,\gamma_I\in\Gamma}\frac{1}{I}\Phi(\gamma_1,...,\gamma_I;\theta)\le \epsilon.
\end{equation*}
The optimal implementations of $\gamma_1,...,\gamma_I$ is achieved by $\argmin_{\gamma_1,...,\gamma_I\in\Gamma} \sup_{\theta\in\Theta} \frac{1}{I}\Phi(\gamma_1,...,\gamma_I;\theta)$.
\end{proof}

\section{Useful Lemmas}

\comment{
\begin{lemma}[Proposition 7.10, \cite{bertsekas2009min}]
\label{lemma: inf is the same}
Given a function $f: \cX\rightarrow [-\infty,\infty]$, denote by
\begin{equation}
F(\x)=\inf\{w:(\x,w)\in\conv(\epi(f))\}.
\end{equation}
Then
\begin{equation}
\inf_{\x\in \cX} f(\x)=\inf_{\x\in\R^t} (\cl f)(\x)=\inf_{\x\in\R^t} F(\x)=\inf_{\x\in\R^t} (\breve\cl f)(\x).
\end{equation}
\end{lemma}
}

\begin{lemma}
\label{lemma: conjugate of sum}
Given the function
\begin{equation*}
(f_1+...+f_I)(\theta):=f_1(\theta)+...+f_I(\theta),
\end{equation*}
where $f_i:\R^t\rightarrow \R$, $i\in[I]$ are closed proper convex functions. Denote by $f_1^*\oplus...\oplus f_I^*$ the infimal convolution
\begin{equation*}
(f_1^*\oplus...\oplus f_I^*)(\u):=\inf_{\u_1+...+\u_I=\u}\{f_1^*(\u_1)+...+f_I^*(\u_I)\},\qquad \u\in\R^t.
\end{equation*}
Provided that $f_1+...+f_I$ is proper, then we have
\begin{equation*}
(f_1+...+f_I)^*(\u)=\cl (f_1^*\oplus...\oplus f_I^*)(\u),\quad \forall\u\in\R^t.
\end{equation*}
\end{lemma}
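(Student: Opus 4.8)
The plan is to derive the identity from two classical ingredients of convex analysis: the ``easy'' duality between infimal convolution and addition, and the Fenchel--Moreau biconjugate theorem. Concretely, I would first establish, for arbitrary functions $g_1,\dots,g_I:\R^t\to(-\infty,+\infty]$, the exact identity $(g_1\oplus\dots\oplus g_I)^*=g_1^*+\dots+g_I^*$. This requires no convexity or closedness and follows by a direct manipulation of suprema: writing out the definition of the conjugate of the infimal convolution, the constrained supremum over a decomposition $\u_1+\dots+\u_I=\theta$ of the total point $\theta$ collapses, after swapping the order of the suprema, into a product of $I$ independent unconstrained suprema, each of which is exactly $g_i^*(\u)$. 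I expect this step to be a couple of lines.

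Next I would apply this identity with $g_i:=f_i^*$. Since each $f_i$ is closed, proper and convex, the Fenchel--Moreau theorem gives $f_i^{**}=f_i$, so $(f_1^*\oplus\dots\oplus f_I^*)^*=f_1^{**}+\dots+f_I^{**}=f_1+\dots+f_I$, which is proper by hypothesis. Taking conjugates of both sides then yields $(f_1+\dots+f_I)^*=(f_1^*\oplus\dots\oplus f_I^*)^{**}$. It remains to identify the right-hand side with $\cl(f_1^*\oplus\dots\oplus f_I^*)$: the $I$-fold infimal convolution of convex functions is again convex, and because its conjugate $f_1+\dots+f_I$ is proper it admits an affine minorant and stays nowhere $-\infty$ after closing, so its biconjugate coincides with its closed hull, i.e.\ with $\cl(f_1^*\oplus\dots\oplus f_I^*)$. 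Chaining these equalities gives the claim. (Equivalently one could invoke Rockafellar, \emph{Convex Analysis}, Thm.~16.4 directly, but spelling out the argument keeps the paper self-contained and reuses exactly the conjugate-theorem machinery already cited elsewhere in the appendix.)

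The only place requiring care --- and the likely main obstacle --- is the properness/closedness bookkeeping. One must check that each $f_i^*$ is proper (true because each $f_i$ is proper, hence has an affine minorant, forcing $f_i^*>-\infty$), that non-attainment of the infimum defining $f_1^*\oplus\dots\oplus f_I^*$ is harmless, and above all that the biconjugate of $f_1^*\oplus\dots\oplus f_I^*$ is \emph{exactly} its closure rather than something strictly smaller --- which is precisely where the hypothesis that $f_1+\dots+f_I$ is proper is consumed. For $I>2$ I would deliberately avoid an induction on the number of summands (which would introduce repeated closure operations and spurious side conditions) and instead run the supremum-swapping computation for all $I$ functions simultaneously, so that associativity of $\oplus$ and of $+$ never has to be invoked underneath a closure.
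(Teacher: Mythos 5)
Your proposal is correct and takes essentially the same route as the paper's proof: both derive the identity $(f_1^*\oplus\cdots\oplus f_I^*)^*=f_1+\cdots+f_I$ by the supremum-swapping computation combined with Fenchel--Moreau applied to each closed proper convex $f_i$, and then conjugate once more, identifying the biconjugate of the (convex) infimal convolution with its closure. If anything, you are more explicit than the paper about the properness bookkeeping needed to justify that final identification.
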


\begin{proof}
For all $\theta\in\R^t$, we have
\begin{equation}
\label{equ: conjugate function and inf convolution}
\begin{split}
f_1(\theta)+...+f_I(\theta)&=\sup_{\u_1}\{\theta^T\u_1-f_1^*(\u_1)\}+...+\sup_{\u_I}\{\theta^T\u_I-f_I^*(\u_I)\}\\
&=\sup_{\u_1,...,\u_I}\{\theta^T(\u_1+...+\u_I)-f_1^*(\u_1)-...-f_I^*(\u_I)\}\\
&=\sup_\u\sup_{\u_1+...+\u_I=\u}\left\{\theta^T\u-f_1^*(\u_1)-...-f_I^*(\u_I)\right\}\\
&=\sup_\u\left\{\theta^T\u-\inf_{\u_1+...+\u_I=\u}f_1^*(\u_1)-...-f_I^*(\u_I)\right\}\\
&=\sup_\u\left\{\theta^T\u-(f_1^*\oplus...\oplus f_I^*)(\u)\right\}\\
&=(f_1^*\oplus...\oplus f_I^*)^*(\theta).
\end{split}
\end{equation}
Therefore,
\begin{equation*}
\cl (f_1^*\oplus...\oplus f_I^*)(\u)=\breve\cl (f_1^*\oplus...\oplus f_I^*)(\u)=(f_1^*\oplus...\oplus f_I^*)^{**}(\u)=(f_1+...+f_I)^*(\u),
\end{equation*}
where the first equality holds because $(f_1^*\oplus...\oplus f_I^*)$ is convex, the second quality is by standard conjugate theorem, and the last equality holds by conjugating the both sides of Eqn. \eqref{equ: conjugate function and inf convolution}.
\end{proof}

\begin{lemma}[Proposition 3.4 (b), \cite{bertsekas2009min}]
\label{lemma: strong duality}
For any function $p(\u)$, denote by $q(\mu):=\inf_{\u\in\R^t} \{p(\u)+\mu^T\u\}$.
We have $\sup_{\mu\in\R^t} q(\mu)=\breve{\cl}p(\0)$.
\end{lemma}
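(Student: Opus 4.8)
The plan is to reduce the identity to the Fenchel--Moreau (biconjugate) theorem by recognizing $q$ as, up to a sign and a reflection, the conjugate of $p$. First I would rewrite $q$ using the conjugate function $p^*(\v):=\sup_\u\{\u^T\v-p(\u)\}$ defined in the paper. Directly from the definition,
\[
q(\mu)=\inf_{\u\in\R^t}\{p(\u)+\mu^T\u\}=-\sup_{\u\in\R^t}\{(-\mu)^T\u-p(\u)\}=-p^*(-\mu).
\]
This single algebraic step converts the dual function $q$ into an evaluation of $p^*$ at the reflected point $-\mu$, which is the key to connecting $\sup_\mu q(\mu)$ with the biconjugate.

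Next I would take the supremum over $\mu$ and perform the change of variable $\v=-\mu$ (which ranges over all of $\R^t$), obtaining
\[
\sup_{\mu\in\R^t} q(\mu)=\sup_{\mu\in\R^t}\{-p^*(-\mu)\}=\sup_{\v\in\R^t}\{-p^*(\v)\}=\sup_{\v\in\R^t}\{\0^T\v-p^*(\v)\}=p^{**}(\0).
\]
The last equality is just the definition of the biconjugate $p^{**}:=(p^*)^*$ evaluated at the origin. At this point the entire claim is reduced to the single identity $p^{**}(\0)=\breve{\cl}p(\0)$.

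To close this gap I would invoke the standard conjugate (biconjugate) theorem, which states that the biconjugate $p^{**}$ coincides with the convex closure $\breve{\cl}p$, i.e.\ the function whose epigraph is the closed convex hull of $\epi(p)$. Evaluating both sides at $\0$ then yields $\sup_{\mu}q(\mu)=p^{**}(\0)=\breve{\cl}p(\0)$, as desired. An alternative, self-contained route would be to argue directly at the level of epigraphs: one shows that $\sup_\mu q(\mu)$ is the optimal value of the best affine (supporting-hyperplane) minorant of $p$ at $\0$, and that this supremum of affine minorants is exactly the closed convex hull $\breve{\cl}p$ evaluated at $\0$, via a separating-hyperplane argument on $\conv(\epi(p))$.

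The main obstacle is the usual properness caveat in the biconjugate theorem: the identity $p^{**}=\breve{\cl}p$ can fail if $\breve{\cl}p$ is improper, e.g.\ takes the value $-\infty$ at some point, which happens when $p$ is not minorized by any affine function. I would address this by appealing to the surrounding hypotheses: in the application $p$ is the infimal convolution $\inf_{\sum_i\u_i=\cdot}\sum_i h_i(\u_i)$ of the functions $h_i$, each of which is continuous on a compact convex domain $\dom h_i$, so $p$ is proper and bounded below, the closed convex hull is proper, and the biconjugate theorem applies without pathology. I would remark that this properness is precisely what the compactness and continuity assumptions on $h_i$ in Theorem~\ref{theorem: duality gap} guarantee, so no additional hypotheses are needed.
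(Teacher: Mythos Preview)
Your proof is correct and follows the standard biconjugate route: rewrite $q(\mu)=-p^*(-\mu)$, take $\sup_\mu$ to obtain $p^{**}(\0)$, and invoke the Fenchel--Moreau theorem $p^{**}=\breve{\cl}p$. The paper itself does not supply a proof of this lemma---it simply cites it as Proposition~3.4(b) of \cite{bertsekas2009min}---so there is no alternative argument to compare against; the approach you outline is essentially the one found in that reference.
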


\section{Distributional approximation properties of Stackelberg GAN}

\begin{theorem}\label{thm.distributionapproximation}
Suppose that $f$ is strictly concave and the discriminator has infinite capacity. Then, the global optimum of Stackelberg GAN is achieved if and only if 
$
\frac{1}{I} \sum_{i = 1}^I \cP_{G_{\gamma_i}(\z)}= \cP_d.
$
\end{theorem}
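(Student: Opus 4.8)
The plan is to follow the classical single-generator analysis of Goodfellow et al., lifted to the ensemble. Write $\cP_{G_{\gamma_i}(\z)}$ for the pushforward of $\cP_z$ under $G_{\gamma_i}$ and set the mixture $\overline{\cP}_G:=\frac1I\sum_{i=1}^I\cP_{G_{\gamma_i}(\z)}$, with densities $p_d,\overline p_G$ with respect to a common base measure. First I would observe that the ensemble objective collapses to a functional of two distributions: $\frac1I\Phi(\gamma_1,\dots,\gamma_I;\theta)=\bbE_{\x\sim\cP_d}f(D_\theta(\x))+\bbE_{\x\sim\overline{\cP}_G}f(1-D_\theta(\x))$, so the Stackelberg value depends on $(\gamma_1,\dots,\gamma_I)$ only through the mixture $\overline{\cP}_G$.

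Second, I would use the infinite-capacity assumption to optimize the discriminator pointwise: since $D_\theta$ can realize any measurable map $\x\mapsto D(\x)\in[0,1]$,
\[
\sup_{\theta}\tfrac1I\Phi(\gamma_1,\dots,\gamma_I;\theta)=\int \psi\big(p_d(\x),\overline p_G(\x)\big)\,d\x,\qquad \psi(a,b):=\sup_{D\in[0,1]}\big[a f(D)+b f(1-D)\big].
\]
Strict concavity of $f$ makes $D\mapsto a f(D)+b f(1-D)$ strictly concave on $[0,1]$, so the supremum is attained at a unique $D^\star(a,b)$, and $\psi$ is well defined and measurable in $(a,b)$.

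Third, I would pin down the optimum with two inequalities. Taking $D\equiv\frac12$ gives $\psi(a,b)\ge(a+b)f(\tfrac12)$ for all $a,b\ge 0$, hence $\sup_\theta\frac1I\Phi\ge 2f(\tfrac12)$ for every choice of generators. Conversely, if $\overline{\cP}_G=\cP_d$ the pointwise objective is $p_d(\x)\big(f(D)+f(1-D)\big)\le 2p_d(\x)f(\tfrac12)$ by midpoint concavity, with equality at $D=\tfrac12$; integrating gives $\sup_\theta\frac1I\Phi=2f(\tfrac12)$. Thus any configuration with $\overline{\cP}_G=\cP_d$ attains the global optimum, whose value is $2f(\tfrac12)$.

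Finally, for the ``only if'' direction I would prove the strict pointwise bound: for $(a,b)\ne(0,0)$, $\psi(a,b)=(a+b)f(\tfrac12)$ if and only if $a=b$. The ``if'' part is the Jensen computation above. For ``only if'', if $a\ne b$ then the one-sided derivative of $D\mapsto a f(D)+b f(1-D)$ at $D=\tfrac12$ equals $(a-b)s$, where $s$ is a subgradient of $f$ at $\tfrac12$; since $f$ is increasing and strictly concave, $s>0$, so $D=\tfrac12$ is not the (unique) maximizer and therefore $\psi(a,b)>(a+b)f(\tfrac12)$. Hence if $\overline{\cP}_G\ne\cP_d$, the set $\{p_d\ne\overline p_G\}$ has positive measure, on which $\psi(p_d,\overline p_G)>(p_d+\overline p_G)f(\tfrac12)$ strictly, so integration yields $\sup_\theta\frac1I\Phi>2f(\tfrac12)$ and the configuration is not a global optimum; combined with the third step this proves the claim. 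I expect the main obstacle to be the measure-theoretic bookkeeping in the second step --- rigorously swapping $\sup_\theta$ with the integral (requiring a common dominating measure for $\cP_d$ and $\overline{\cP}_G$ and a measurable-selection argument for $D^\star$) --- together with the strictness argument when $f$ is only assumed strictly concave rather than differentiable, where one must work with one-sided derivatives and use that $f$ increasing forces the subgradient at $\tfrac12$ to be strictly positive.
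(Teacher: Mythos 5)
Your overall route is the same as the paper's: the paper's proof simply observes that the ensemble objective equals $L\bigl(\cP_d,\frac1I\sum_i\cP_{G_{\gamma_i}(\z)}\bigr)$ for the single-pair functional $L$, and then \emph{asserts} (without proof) that strict concavity plus infinite discriminator capacity implies $L(\cP_1,\cP_2)$ is minimized in $\cP_2$ exactly at $\cP_2=\cP_1$. Your first step is exactly this reduction to the mixture, and your second and third steps (pointwise optimal discriminator, lower bound $2f(\tfrac12)$ attained when the mixture equals $\cP_d$) correctly fill in details the paper leaves implicit; for differentiable $f$ (e.g.\ $f=\log$) your argument is a complete and more rigorous version of what the paper sketches.

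There is, however, a concrete problem in your ``only if'' step as you propose to handle general strictly concave $f$. You claim the one-sided derivative of $g(D)=af(D)+bf(1-D)$ at $D=\tfrac12$ equals $(a-b)s$ with $s$ a supergradient of $f$ at $\tfrac12$; this identity only holds when $f$ is differentiable at $\tfrac12$. If $f$ has a kink there, the right derivative is $a f'_+(\tfrac12)-b f'_-(\tfrac12)$ and the left derivative is $a f'_-(\tfrac12)-b f'_+(\tfrac12)$, and for density ratios $a/b$ in the interval $[f'_+(\tfrac12)/f'_-(\tfrac12),\,f'_-(\tfrac12)/f'_+(\tfrac12)]$ both signs are compatible with $D=\tfrac12$ remaining the maximizer even though $a\neq b$; then $\psi(a,b)=(a+b)f(\tfrac12)$ and the strict inequality you need disappears. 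So ``$s>0$'' is not enough, and in fact under bare strict concavity the equivalence can fail (any strictly concave increasing $f$ with a kink at $\tfrac12$ gives mixtures $\overline{\cP}_G\neq\cP_d$ with the same optimal value $2f(\tfrac12)$, provided the density ratio stays in that interval). The fix is to add differentiability of $f$ at $\tfrac12$ (or an equivalent smoothness condition), under which your subgradient argument goes through; note that this gap is inherited from, and hidden inside, the paper's own unproved assertion rather than being introduced by your decomposition.
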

\begin{proof}
We define
\begin{align*}
    L(\cP_d, \cP_{G_\gamma(\z)} ) = \sup_{\theta \in \mathbb{R}^t} \bbE_{\x\sim \cP_d} f(D_\theta(\x))+\bbE_{\z\sim \cP_z} f(1-D_\theta (G_\gamma(\z))). 
\end{align*}
Clearly, the vanilla GAN optimization can be understood as projecting under $L$:
\begin{align*}
    \inf_{\gamma \in \mathbb{R}^g}  L(\cP_d, \cP_{G_\gamma(\z)} ) .
\end{align*}

\comment{
\begin{theorem}\label{thm.distributionapproximation}
Suppose $L(\cP_1, \cP_2)$ as a function of $\cP_2$ achieves the global minimum if and only if $\cP_2 = \cP_1$. Then, the global optimum of Stackelberg GAN is achieved if and only if 
\begin{align*}
\frac{1}{I} \sum_{i = 1}^I \cP_{G_{\gamma_i}(\z)}= \cP_d.
\end{align*}
\end{theorem}
We further emphasize that there is no loss in assuming the equal weight on each generator. Since each generator $\gamma_i, 1\leq i\leq I$ could be arbitrary, it suffices to encode the mixture probabilities in the generator parameters $\gamma_i$ and optimize jointly.
}

In the Stackelberg GAN setting, we are projecting under a different distance $\tilde{L}$ which is defined as
\begin{align}
    \tilde{L} & = \sup_{\theta \in \mathbb{R}^t} \bbE_{\x\sim \cP_d} f(D_\theta(\x))+ \frac{1}{I} \sum_{i = 1}^I \bbE_{\z\sim \cP_z} f(1-D_\theta (G_{\gamma_i}(\z))) \nonumber \\
    & = L\left(\cP_d, \frac{1}{I} \sum_{i = 1}^I \cP_{G_{\gamma_i}(\z)}\right). \label{eqn.distrimixture}
\end{align}
We note that $f$ is strictly concave and the discriminator has capacity large enough implies the followings: $L(\cP_1, \cP_2)$, as a function of $\cP_2$, achieves the global minimum if and only if $\cP_2 = \cP_1$.
The theorem then follows from this fact and \eqref{eqn.distrimixture}.

\end{proof}

\section{Network Setup}
We present the network setups in various experiments of our paper below.
\begin{table}[h]
\caption{Architecture and hyper-parameters for the mixture of Gaussians dataset.}
\vspace{-0.5cm}
\begin{center}
\begin{tabular}{rcccc}
\hline
Operation & Input Dim & Output Dim & BN? & Activation\\
\hline
\hline
Generator $G(\z):\z\sim \cN(\0,\1)$ &  & 2\\
Linear & 2 & 16 & \cmark\\

Linear & 16 & 2 & & Tanh\\
\hline
Discriminator\\
Linear & 2 & 512 &  & Leaky ReLU\\
Linear & 512 & 256 &  & Leaky ReLU\\
Linear & 256 & 1 &  & Sigmoid\\
\hline
Number of generators & 8\\
Batch size for real data & 64\\
Number of iterations & 200\\
Slope of Leaky ReLU & 0.2\\
Learning rate & 0.0002\\
Optimizer & Adam\\
\hline
\end{tabular}
\end{center}
\label{table: Architecture and hyperparameters for the mixture of Gaussians dataset}
\end{table}

\begin{table}[h]
\caption{Architecture and hyper-parameters for the MNIST datasets.}
\vspace{-0.5cm}
\begin{center}
\begin{tabular}{rcccc}
\hline
Operation & Input Dim & Output Dim & BN? & Activation\\
\hline
\hline
Generator $G(\z):\z\sim \cN(\0,\1)$ &  & 2\\
Linear & 100 & 512 & \cmark\\
Linear & 512 & 784 & & Tanh\\
\hline
Discriminator\\
Linear & 2 & 512 &  & Leaky ReLU\\
Linear & 512 & 256 &  & Leaky ReLU\\
Linear & 256 & 1 &  & Sigmoid\\
\hline
Number of generators & 10\\
Batch size for real data & 100\\
Slope of Leaky ReLU & 0.2\\
Learning rate & 0.0002\\
Optimizer & Adam\\
\hline
\end{tabular}
\end{center}
\label{table: Architecture and hyperparameters for the MNIST dataset}
\end{table}

\begin{table}[h]
\caption{Architecture and hyper-parameters for the Fashion-MNIST datasets.}
\begin{center}
\begin{tabular}{rcccc}
\hline
Operation & Input Dim & Output Dim & BN? & Activation\\
\hline
\hline
Generator $G(\z):\z\sim \cN(\0,\1)$ &  & 2\\
Linear & 2 & 128 & \cmark\\
Linear & 128 & 256 & \cmark\\
Linear & 256 & 512 & \cmark\\
Linear & 512 & 1024 & \cmark\\
Linear & 1024 & 784 & & Tanh\\
\hline
Discriminator\\
Linear & 2 & 512 &  & Leaky ReLU\\
Linear & 512 & 256 &  & Leaky ReLU\\
Linear & 256 & 1 &  & Sigmoid\\
\hline
Number of generators & 10\\
Batch size for real data & 100\\
Number of iterations & 500\\
Slope of Leaky ReLU & 0.2\\
Learning rate & 0.0002\\
Optimizer & Adam\\
\hline
\end{tabular}
\end{center}
\label{table: Architecture and hyperparameters for the MNIST dataset}
\end{table}

\begin{table}[h]
\caption{Architecture and hyper-parameters for the CIFAR-10 dataset.}
\begin{center}
\begin{tabular}{rcccccc}
\hline
Operation & Kernel & Strides & Feature maps & BN? & BN center?& Activation\\
\hline
\hline
$G(\z):\z\sim \uniform[-1,1]$ &  & &100&&&\\
Fully connected &  &  & 8$\times$8$\times$128&\xmark&\xmark&ReLU\\
Transposed convolution & 5$\times$5 & 2$\times$2 &64&\xmark&\xmark&ReLU\\
Transposed convolution & 5$\times$5 & 2$\times$2 &3&\xmark&\xmark&Tanh\\
\hline
$D(\x)$ &  & & 8$\times$8$\times$256 & & &\\
Convolution & 5$\times$5 & 2$\times$2 & 128 &\cmark &\cmark &Leaky ReLU \\
Convolution & 5$\times$5 & 2$\times$2 & 256 & \cmark &\cmark&Leaky ReLU\\
Convolution& 5$\times$5 & 2$\times$2 & 512 & \cmark &\cmark&Leaky ReLU\\
Fully connected &&&1&\xmark &\xmark &Sigmoid\\
\hline
Number of generators & \multicolumn{2}{c}{10}\\
Batch size for real data & \multicolumn{2}{c}{64}\\
Batch size for each generator& \multicolumn{2}{c}{64}\\
Number of iterations & \multicolumn{2}{c}{100}\\
Slope of Leaky ReLU & \multicolumn{2}{c}{0.2}\\
Learning rate & \multicolumn{2}{c}{0.0002}\\
Optimizer & \multicolumn{3}{c}{Adam($\beta_1 = 0.5,\beta_2 = 0.999)$}\\
Weight, bias initialization & \multicolumn{3}{c}{$\cN(\mu = 0, \sigma = 0.01),0$}\\
\hline
\end{tabular}
\end{center}
\label{table: Architecture and hyperparameters for the CIFAR-10 dataset}
\end{table}

\begin{table}[h]
\caption{Architecture and hyper-parameters for the Tiny ImageNet dataset.}
\begin{center}
\begin{tabular}{rcccccc}
\hline
Operation & Kernel & Strides & Feature maps & BN? & BN center?& Activation\\
\hline
\hline
$G(\z):\z\sim \uniform[-1,1]$ &  & &100&&&\\
Fully connected &  &  & 8$\times$8$\times$256&\xmark&\xmark&ReLU\\
Transposed convolution & 5$\times$5 & 2$\times$2 &128&\xmark&\xmark&ReLU\\
Transposed convolution & 5$\times$5 & 2$\times$2 &3&\xmark&\xmark&Tanh\\
\hline
$D(\x)$ &  & & 8$\times$8$\times$256 & & &\\
Convolution & 5$\times$5 & 2$\times$2 & 128 &\cmark &\cmark &Leaky ReLU \\
Convolution & 5$\times$5 & 2$\times$2 & 256 & \cmark &\cmark&Leaky ReLU\\
Convolution& 5$\times$5 & 2$\times$2 & 512 & \cmark &\cmark&Leaky ReLU\\
Fully connected &&&1&\xmark &\xmark &Sigmoid\\
\hline
Number of generators & \multicolumn{2}{c}{10}\\
Batch size for real data & \multicolumn{2}{c}{64}\\
Batch size for each generator& \multicolumn{2}{c}{64}\\
Number of iterations & \multicolumn{2}{c}{300}\\
Slope of Leaky ReLU & \multicolumn{2}{c}{0.2}\\
Learning rate & \multicolumn{2}{c}{0.00001}\\
Optimizer & \multicolumn{3}{c}{Adam($\beta_1 = 0.5,\beta_2 = 0.999)$}\\
Weight, bias initialization & \multicolumn{3}{c}{$\cN(\mu = 0, \sigma = 0.01),0$}\\
\hline
\end{tabular}
\end{center}
\label{table: Architecture and hyperparameters for the Imagenet dataset}
\end{table}

\begin{figure}[t]
\centering
    \includegraphics[width=0.8\textwidth]{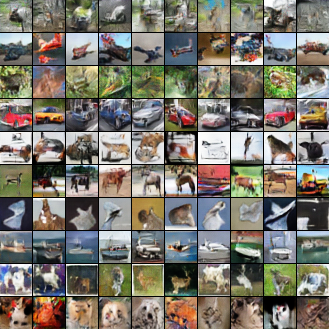}
  \caption{Examples generated by Stackelberg GAN with 10 generators on CIFAR-10 dataset, where each row corresponds to samples from one generator.}
    \label{fig:cifar_example1}
\vspace{-0.5cm}
\end{figure}

\begin{figure}[t]
\centering
    \includegraphics[width=0.8\textwidth]{samples_0471.png}
  \caption{Examples generated by Stackelberg GAN with 10 generators on CIFAR-10 dataset, where each row corresponds to samples from one generator.}
    \label{fig:cifar_example2}
\vspace{-0.5cm}
\end{figure}

\begin{figure}[t]
\centering
    \includegraphics[width=0.8\textwidth]{samples_0181.png}
  \caption{Examples generated by Stackelberg GAN with 10 generators on Tiny ImageNet dataset, where each row corresponds to samples from one generator.}
  \label{fig:imagenet_example1}
\vspace{-0.5cm}
\end{figure}

\begin{figure}[t]
\centering
    \includegraphics[width=0.8\textwidth]{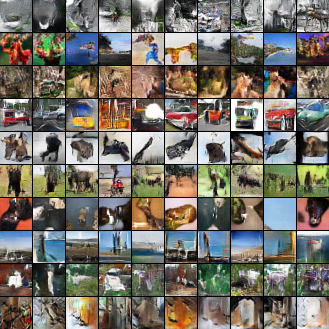}
  \caption{Examples generated by Stackelberg GAN with 10 generators on Tiny ImageNet dataset, where each row corresponds to samples from one generator.}
    \label{fig:imagenet_example2}
\vspace{-0.5cm}
\end{figure}
\end{document}